\DeclarePairedDelimiter{\floor}{\lfloor}{\rfloor}
\newcommand{\name}{Turbo-Aggregate} 
\newcommand{\Nl}{N_l} 
\newtheorem{theorem}{Theorem}
\newtheorem{remark}{Remark}
\colorlet{Mycolor1}{black}
\colorlet{Mycolor2}{black}
\colorlet{Mycolor3}{black}
\newcommand{\RNum}[1]{\uppercase\expandafter{\romannumeral #1\relax}}
\begin{document}
%
\title{{\name}: Breaking the Quadratic Aggregation Barrier in Secure Federated Learning}
%
%
%


\author{Jinhyun So,\quad Ba\c{s}ak~G{\"u}ler,~\IEEEmembership{Member,~IEEE} \quad and \quad A. Salman Avestimehr,~\IEEEmembership{Fellow,~IEEE}
\thanks{
    %
    Jinhyun So is with the Department of Electrical and Computer Engineering, University of Southern California, Los Angeles, CA, 90089 USA (e-mail: jinhyuns@usc.edu). Ba\c{s}ak G{\"u}ler is with the Department of Electrical and Computer Engineering, University of California, Riverside, CA, 92521 USA (email: bguler@ece.ucr.edu). 
    A. Salman Avestimehr is with the Department of Electrical and Computer Engineering, University of Southern California, Los Angeles, CA, 90089 USA (e-mail: avestimehr@ee.usc.edu).
    
    This work is published in IEEE Journal on Selected Areas in Information Theory \cite{so2021turbo}.
    }
}

\maketitle


\vspace{-1.2cm}
\begin{abstract} 
Federated learning is a distributed framework for training machine learning models over the data residing at mobile devices, while protecting the privacy of individual users. A major bottleneck in scaling federated learning to a large number of users is the overhead of secure model aggregation across many users. In particular, the overhead of the state-of-the-art protocols for secure model aggregation grows quadratically with the number of users. In this paper, we propose the first secure aggregation framework, named {\name}, that in a network with $N$ users achieves a secure aggregation overhead of $O(N\log{N})$, as opposed to $O(N^2)$, while tolerating up to a user dropout rate of $50\%$. {\name} employs a multi-group circular strategy for efficient model aggregation, and leverages additive secret sharing and novel coding techniques for injecting aggregation redundancy in order to handle user dropouts while guaranteeing user privacy. We experimentally demonstrate that {\name} achieves a total running time that grows almost linear in the number of users, and provides up to $40\times$ speedup over the state-of-the-art protocols with up to $N=200$ users. Our experiments also demonstrate the impact of bandwidth on the performance of {\name}.
\end{abstract}

\vspace{-0.1cm}\begin{IEEEkeywords}
    Federated learning, privacy-preserving machine learning, secure aggregation.
\end{IEEEkeywords}

%
\IEEEpeerreviewmaketitle

\section{Introduction}
Federated learning is an emerging approach that enables model training over a large volume of decentralized data residing in mobile devices, while protecting the privacy of the individual users \cite{mcmahan2016communication, bonawitz2016practical, bonawitz2017practical, kairouz2019advances}. This is achieved by two key design principles. First, the training data is kept on the user device rather than sending it to a central server, and users locally perform model updates using their individual data. Second, local models are aggregated in a privacy-preserving framework, either at a central server (or in a distributed manner across the users) to update the global model. 
The global model is then pushed back to the mobile devices for inference. 
This process is demonstrated in Figure~\ref{federated}. 

\begin{figure}[t]
\centering
\includegraphics[width=0.8\linewidth]{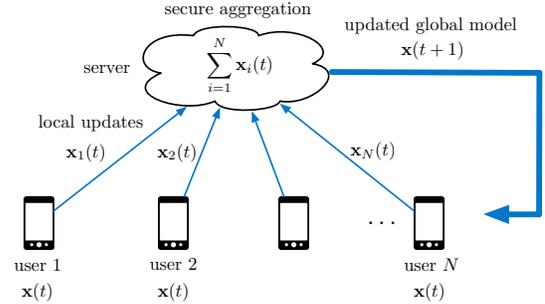}
\vspace{-0.1cm}
\caption{Federated learning framework. At iteration $t$, the central server sends the current version of the global model, $\mathbf{x}(t)$, to the mobile users. User $i\in[N]$ updates the global model using its local data, and computes a local model $\mathbf{x}_i(t)$. The local models are then  aggregated in a privacy-preserving manner. Using the aggregated models, the central server updates the global model $\mathbf{x}(t+1)$ for the next round, and pushes it back to the mobile users.}
\label{federated}
\vspace{-0.3cm}
\end{figure}

The privacy of individual models in federated learning is protected through what is known as a \emph{secure aggregation} protocol \cite{bonawitz2016practical, bonawitz2017practical}. 
In this protocol, each user locally masks its own model using pairwise random masks and sends the masked model to the server. The pairwise masks have a unique property that once the masked models from all users are summed up at the server, the pairwise masks cancel out. 
As a result, the server learns the aggregate of all models, but no individual model is revealed to the server during the process. 
This is a key property for ensuring user privacy in secure federated learning. 
In contrast, conventional distributed training setups that do not employ secure aggregation may reveal extensive information about the private datasets of the users, which has been recently shown in \cite{NIPS2019_9617, 8737416,geiping2020inverting}.
To prevent such information leakage, secure aggregation protocols ensure that the individual update of each user is kept private, both from other users and the central server \cite{bonawitz2016practical, bonawitz2017practical}.
A recent promising implementation of federated learning, as well as its application to Google keyboard query suggestions is demonstrated in \cite{yang2018applied}. Several other works have also demonstrated that leveraging the information that is distributed over many mobile users can increase the training performance dramatically, while ensuring data privacy and locality \cite{mcmahan2017learning,bonawitz2019towards, li2020federated}.

The overhead of secure model aggregation, however, creates a  major bottleneck in scaling secure federated learning to a large number of users.
More specifically, in a network with $N$ users, the state-of-the-art protocols for secure aggregation require pairwise random masks to be generated between each pair of users (for hiding the local model updates), and therefore the overhead of secure aggregation grows quadratically in the number of users (i.e., $O(N^2)$) 
\cite{bonawitz2016practical, bonawitz2017practical}. 
This quadratic growth of secure aggregation overhead limits its practical applications to hundreds of users while the scale of current mobile systems is in the order of tens of millions \cite{bonawitz2019towards}.

Another key challenge in model aggregation is the dropout or unavailability of the users. 
Device availability and connection quality in mobile networks change rapidly, and users may drop from federated learning systems at any time due to various reasons, such as poor connectivity, making a phone call, low battery, etc. 
The design protocol hence needs to be robust to operate in such environments, where users can drop at any stage of the protocol execution. Furthermore, dropped or delayed users can lead to privacy breaches \cite{bonawitz2017practical}, and privacy guarantees should hold even in the case when users are dropped or delayed.



In this paper, we introduce a novel secure aggregation framework for federated learning, named  {\name}, with four salient features:  
\begin{enumerate}
\item {\name} reduces the overhead of secure aggregation to $O(N\log{N})$ from  $O(N^2)$; 

\item {\name} has provable robustness guarantees against up to a user dropout rate of $50\%$; 

\item {\name} protects the privacy of the local model updates of each individual user, in the strong information-theoretic sense; 

\item {\name} experimentally achieves a total running time that grows almost linear in the number of users, and provides up to $40\times$ speedup over the state-of-the-art with $N=200$ users, in distributed implementation over Amazon EC2 cloud.    
\end{enumerate}

At a high level, {\name} is composed of three main components. First, {\name} employs a multi-group circular strategy for model aggregation. 
In particular, the users are partitioned into several groups, and at each aggregation stage, the users in one group pass the aggregated models of all the users in the previous groups and current group to users in the next group. We show that this structure enables the reduction of aggregation overhead to $O(N\log{N})$ (from $O(N^2)$). 
However, there are two key challenges that need to be addressed in the proposed multi-group circular strategy for model aggregation. The first one is to protect the privacy of the individual user, i.e., the aggregation protocol should not allow the identification of individual model updates. The second one is handling the user dropouts. 
For instance, a user dropped at a higher group of the protocol may lead to the loss of the aggregated model information from all the previous groups, and collecting this information again from the lower groups may incur a large communication overhead. 

The second key component is to leverage additive secret sharing~\cite{beimel2011secret, evans2018pragmatic} to enable privacy and security of the users. 
In particular, additive sharing masks each local model by adding randomness in a way that can be cancelled out once the models are aggregated. 
Finally, the third component is to add aggregation redundancy via Lagrange coding~\cite{yu2018lagrange} to enable robustness against delayed or dropped users. In particular, {\name} injects redundancy via Lagrange polynomial so that the added redundancy can be exploited to reconstruct the aggregated model amidst potential dropouts.

{\name} allows the use of both centralized and decentralized communication architectures. 
The centralized architecture refers to the communication model used in the conventional federated learning setup where all communication goes through a central server, i.e., the server acts as an access point \cite{mcmahan2016communication, bonawitz2017practical, kairouz2019advances}. The decentralized architecture, on the other hand, refers to the setup where mobile devices communicate directly with each other via an underlay communication network (e.g., a peer-to-peer network) \cite{lalitha2019peer, he2019central} without requiring a central server for secure model aggregation. {\name} also allows additional parallelization opportunities for communication, such as broadcasting and multi-casting.

We theoretically analyze the performance guarantees of {\name} in terms of the aggregation overhead, privacy protection, and robustness to dropped or delayed users. 
In particular, we show that {\name} achieves an aggregation overhead of $O(N\log{N})$ and can tolerate a user dropout rate of $50\%$. 
We then quantify the privacy guarantees of our system. 
An important implication of dropped or delayed users is that they may lead to privacy breaches \cite{bonawitz2016practical}. 
Accordingly, we show that the privacy-protection of our algorithm is preserved in such scenarios, i.e., when users are dropped or delayed. 

We also provide extensive experiments to numerically evaluate the performance of {\name}. 
To do so, we implement {\name} for up to $200$ users on the Amazon EC2 cloud, and compare its performance with the state-of-the-art secure aggregation protocol from \cite{bonawitz2017practical}.
We demonstrate that {\name} can achieve an overall execution time that grows almost linear in the number of users, and provides up to $40\times$ speedup over the state-of-the-art with $200$ users. Furthermore, the overall execution time of {\name} remains stable as the user dropout rate increases, while for the benchmark protocol, the overall execution time significantly increases as the user dropout rate increases. 
We further study the impact of communication bandwidth on the performance of {\name}, by measuring the total running time with various bandwidth constraints. 
Our experimental results demonstrate that {\name} still provides substantial gain in environments with more severe bandwidth constraints.

\section{Related Work}
A potential solution for secure aggregation is to leverage cryptographic approaches, such as multiparty computation (MPC), homomorphic encryption, or differential privacy.
MPC-based techniques mainly utilize Yao's garbled circuits or secret sharing (e.g., \cite{yao1982protocols, shamir1979share, ben1988completeness, beerliova2008perfectly}). 
Their main bottleneck is the high communication cost, and communication-efficient implementations require an extensive offline computation part \cite{ben1988completeness, beerliova2008perfectly}. 
A notable recent work is \cite{burkhart2010sepia}, which focuses on optimizing MPC protocols for network security and monitoring. 
Homomorphic encryption is a cryptographic secure computation scheme that allows aggregations to be performed on encrypted data \cite{leontiadis2014private, rastogi2010differentially, halevi2011secure}. However, the  privacy guarantees of homomorphic encryption depends on the size of the encrypted data (more privacy requires a larger encypted data size), and performing computations in the encrypted domain is computationally expensive \cite{gentry2009fully, damgaard2012multiparty}. 
Differential privacy is a a noisy release mechanism that preserves the privacy of personally identifiable information, in that the removal of any single element from the dataset does not affect the computation outcomes significantly. As such, the computation outcomes cannot be used to infer much about any single individual element \cite{dwork2006calibrating}.
In the context of federated learning, differential privacy is mainly used to ensure that individual data points from the local datasets cannot be identified from the local updates sent to the server, by adding artificial noise to the local updates at the clients’ side \cite{geyer2017differentially,mcmahan2017learning,wei2020federated}.
This approach entails a trade-off between convergence performance and privacy protection, i.e., stronger privacy guarantees lead to a degradation in the convergence performance.  
On the other hand, our focus is on ensuring that the server or a group of colluding users can learn nothing beyond the aggregate of all local updates, while preserving the accuracy of the  model. This approach, also known as secure aggregation \cite{bonawitz2016practical, bonawitz2017practical}, does not sacrifice the convergence performance.

A recent line of work has focused on secure aggregation by additive masking  \cite{acs2011have}, \cite{bonawitz2017practical}. In \cite{acs2011have}, users agree on pairwise secret keys using a Diffie-Hellman type key exchange protocol and then each user sends the server a masked version of their data, which contains the pairwise masks as well as an individual mask. 
The server can then sum up the masked data received from the users to obtain the aggregated value, as the summation of additive masks cancel out. 
If a user fails and drops out, the server asks the remaining users to send the sum of their pairwise keys with the dropped users added to their individual masks, and subtracts them from the aggregated value. The main limitation of this protocol is the communication overhead of this recovery phase, as it requires the entire sum of the missing masks to be sent to the server. Moreover, the protocol terminates if additional users drop during this phase. 

A novel technique is proposed in \cite{bonawitz2017practical} to ensure that the protocol is robust if additional users drop during the recovery phase. It also ensures that the additional information sent to the server does not breach privacy. 
To do so, the protocol utilizes pairwise random masks between users to hide the individual models. The cost of reconstructing these masks, which takes the majority of execution time, scales with respect to $O(N^2)$, with $N$ corresponding to the number of users. 
The execution time of \cite{bonawitz2017practical} increases as more users are dropped, as the protocol requires additional information corresponding to the dropped users. 
The recovery phase of our protocol does not require any additional information to be shared between the users, which is achieved by a coding technique applied to the additively secret shared data. Hence, the execution time of our algorithm stays almost the same as more and more users are dropped, the only overhead comes from the decoding phase whose contribution is very small compared to the overall communication cost.

Notable approaches to reduce the communication cost in federated learning include reducing the model size via quantization, or learning in a smaller parameter space \cite{konevcny2016federated}. In \cite{bonawitz2019federated}, a framework has been proposed for autotuning the parameters in secure federated learning, to achieve communication-efficiency. Another line of work has focused on approaches based on decentralized learning \cite{he2018cola, lian2017can} or edge-assisted hierarchical physical layer topologies \cite{liu2019edge}. Specifically, \cite{liu2019edge} utilizes edge servers to act as an intermediate aggregator for the local updates from edge devices. The global model is then computed at the central server by aggregating the intermediate computations available at the edge servers. 
These setups perform the aggregation using the clear (unmasked) model updates, i.e., the aggregation is not required to preserve the privacy of individual model updates.  
Our focus is different, as we study the secure aggregation problem which requires the server to learn no information about an individual update beyond the aggregated values. Finally, approaches that aim at alleviating the aggregation overhead by reducing the model size (e.g., quantization \cite{konevcny2016federated}) can also be leveraged in {\name}, which can be an interesting future direction.

Circular communication and training architectures have been considered previously in the context of distributed stochastic gradient descent on clear (unmasked) gradient updates, to reduce communication load \cite{li2018pipe} or to model data-heterogeneity \cite{eichner2019semi}. Different from these setups, our key challenge in this work is handling user dropouts while ensuring user privacy, i.e., secure aggregation. 
Conventional federated learning frameworks consider a centralized communication architecture in which all communication between the mobile devices goes through a central server \cite{bonawitz2017practical, mcmahan2016communication, kairouz2019advances}. More recently, decentralized federated learning architectures without a central server have been considered for peer-to-peer learning on graph topologies \cite{lalitha2019peer} and in the context of social networks \cite{he2019central}. 
Model poisoning attacks on federated learning architectures have been analyzed in \cite{bhagoji2018analyzing, fang2019local}. Differentially-private federated learning frameworks have been studied in \cite{geyer2017differentially, sun2019can}. A multi-task learning framework for federated learning has been proposed in \cite{smith2017federated}, for learning several models simultaneously. \cite{mohri2019agnostic, li2019fair} have explored federated learning frameworks to address fairness challenges and to avoid biasing the trained model towards certain users.  Convergence properties of trained models are studied in \cite{li2019convergence}.

\section{System Model}~\label{2-ProbSetting}

In this section, we first discuss the basic federated learning model. Next, we introduce the secure aggregation protocol for federated learning and discuss the key parameters for performance evaluation.
Finally, we present the state-of-the-art for secure aggregation. 

\subsection{Basic Federated Learning Model}\label{basicFL}

Federated learning is a distributed learning framework that allows training machine learning models directly on the data held at distributed devices, such as mobile phones. 
The goal is to learn a single global model $\mathbf{x}$ with dimension $d$, using data that is generated, stored, and processed locally at millions of remote devices. This can be represented by minimizing a global objective function,
\begin{equation}\label{eq:objective_fnc}
\min_{\mathbf{x}} L(\mathbf{x}) \text{ such that }  L(\mathbf{x}) = \sum_{i=1}^N w_i L_i (\mathbf{x}), 
\end{equation} 
where $N$ is the total number of mobile users, $L_i$ is the local objective function of user $i$, and $w_i \geq 0$ is a weight parameter assigned to user $i$ to specify the relative impact of each user such that $\sum_{i}w_i = 1$. One natural setting of the weight parameter is $w_i=\frac{m_i}{m}$ where $m_i$ is the number of samples of user $i$ and $m=\sum_{i=1}^N m_i$\footnote{For simplicity, we assume that all users have equal-sized datasets i.e., a weight parameter assigned to user $i$ satisfies $w_i=\frac{1}{N}$ for all $i\in[N]$.}.


To solve \eqref{eq:objective_fnc}, conventional federated learning architectures consider a centralized communication topology in which all communication between the individual devices goes through a central server \cite{mcmahan2016communication, bonawitz2017practical, kairouz2019advances}, and no direct links are allowed between the mobile users.
The learning setup is as demonstrated in Figure~\ref{federated}.
At iteration $t$, the central server shares the current version of the global model, $\mathbf{x}(t)$, with the mobile users. 
Each user then updates the model using its local data. 
User $i\in[N]$ then computes a local model $\mathbf{x}_i(t)$.
To increase communication efficiency, each user can update the local model over multiple local epochs before sending it to the server \cite{mcmahan2016communication}.
The local models of the $N$ users are sent to the server and then aggregated by the server. 
Using the aggregated models, the server updates the global model $\mathbf{x}(t+1)$ for the next iteration.
This update equation is given by
\begin{equation}
    \mathbf{x}(t+1) = \sum_{i\in \mathcal{U}(t)} \mathbf{x}_i(t), \label{eq:update_eq}
\end{equation}
where $\mathcal{U}(t)$ denotes the set of participating users at iteration $t$.
Then, the server pushes the updated global model $\mathbf{x}(t+1)$ to the mobile users.

\subsection{Secure Aggregation Protocol for Federated Learning and Key Parameters}

The basic federated learning model from Section~\ref{basicFL} aims at addressing the 
privacy concerns over transmitting raw data to the server, by letting the training data remain on the user device and instead requiring only the local models to be sent to the server. 
However, as the local models still carry extensive information about the local datasets stored at the users, the server can reconstruct the private data from the local models by using a model inversion attack, which has been  recently demonstrated in \cite{NIPS2019_9617, 8737416,geiping2020inverting}. Secure aggregation has been introduced  in \cite{bonawitz2017practical} to address such privacy leakage from the local models. A secure aggregation protocol enables  the computation of the aggregation operation in  \eqref{eq:update_eq} while ensuring that the server learns no information about the local models $\mathbf{x}_i(t)$ beyond their aggregated value $\sum_{i=1}^N \mathbf{x}_i(t)$.
In this paper, our focus is on the aggregation phase in \eqref{eq:update_eq} and how to make this aggregation phase secure and efficient.
In particular, our goal is to evaluate the aggregate of the local models
\begin{equation} \label{eq:goal_eq}
    \mathbf{z} = \sum_{i\in \mathcal{U}} \mathbf{x}_i, 
\end{equation}
where we omit the iteration index $t$ for simplicity. 
As we discuss in Section~\ref{state-of-the-art} and Appendix A in detail, secure aggregation protocols build on cryptographic primitives that require all operations to be carried out over a finite field. 
Accordingly, similar to prior works \cite{bonawitz2016practical, bonawitz2017practical}, we assume that the elements of $\mathbf{x}_{i}^{(l)}$ and $\mathbf{z}$ are from a finite field $\mathbb{F}_q$ for some field size $q$. 

We evaluate the performance of a secure aggregation protocol for federated learning through the following key parameters.

\begin{enumerate}
    \item \textit{Robustness guarantee:}
    We consider a network model in which each user can drop from the network with a probability $p\in[0,1]$, called the user dropout rate. In a real world setting, the dropout rate varies between $0.06$ and $0.1$~\cite{bonawitz2019towards}. 
    The robustness guarantee quantifies the maximum user dropout rate that a protocol can tolerate with a probability approaching to $1$ as $N \rightarrow \infty$ to correctly evaluate the aggregate of the surviving user models.
    
    \item \textit{Privacy guarantee:}
    We consider a security model where the users and the server are honest but curious. We assume that up to $T$ users can collude with each other as well as with the server for learning the models of other users. The privacy guarantee quantifies the maximum number of colluding entities that the protocol can tolerate for the individual user models to keep private.
    
    \item \textit{Aggregation overhead:}
    The aggregation overhead, denoted by $C$, quantifies the asymptotic time complexity (i.e., runtime) with respect to the number of mobile users, $N$, for aggregating the models of all users in the network. Note that this includes both the computation and communication time complexities.     
\end{enumerate}

\subsection{State-of-the-art for Secure Aggregation}\label{state-of-the-art}

The state-of-the-art for secure aggregation in federated learning is the protocol proposed in  \cite{bonawitz2017practical}. 
In this protocol, each mobile user locally trains a model. By using pairwise random masking, the local models are securely aggregated through a central server, who then updates the global model. 
We present the details of the state-of-the-art in Appendix~A.
This protocol achieves robustness guarantee to user dropout rate of up to $p=0.5$, while providing privacy guarantee to up to $T=\frac{N}{2}$ colluding users.
However, its aggregation overhead is quadratic with the number of users (i.e., $C=O(N^2)$). 
This quadratic aggregation overhead severely limits the network size for real-world applications \cite{bonawitz2019towards}. 


Our goal in this paper is to develop a secure aggregation protocol that can provide comparable robustness and privacy guarantees as the state-of-the-art, while achieving a significantly lower (almost linear) aggregation overhead. 

\section{The {\name} Protocol}~\label{3-Algorithm}
We now introduce the {\name} protocol for secure federated learning that can simultaneously achieve robustness guarantee to a user dropout rate of up to $p=0.5$, privacy guarantee to up to $T=\frac{N}{2}$ colluding users, and aggregation overhead of $C=O(N\log{N})$.
{\name} is composed of three main components. 
First, it creates a multi-group circular aggregation structure for fast model  aggregation. Second, it leverages  additive secret sharing by adding randomness in a way that can be cancelled out once the models are aggregated, in order to guarantee the privacy of the users. Third, it adds aggregation redundancy via Lagrange polynomial in the model updates that are passed from one group to the next, so that the added redundancy can be exploited to reconstruct the aggregated model amidst potential user dropouts. 
We now describe each of these components in detail. 
An illustrative example is also presented in Appendix~B to demonstrate the execution of {\name}.

\vspace{-0.2cm}
\subsection{Multi-group circular aggregation}
\label{sec:multi-group} 
{\name} computes the aggregate of the  individual user models by utilizing a circular aggregation strategy. 
Given a mobile network with $N$ users, this is done by first partitioning the users into $L$ groups as shown in Figure~\ref{phones}, with $N_l$ users in group $l\in[L]$, such that $\sum_{l\in[L]}N_{l}=N$. 
We consider a random partitioning strategy in which each user is assigned to one of the available groups uniformly at random, 
by using a bias-resistant public randomness generation protocol such as in  \cite{syta2017scalable}. 
We use $\mathcal{U}_l\subseteq[N_l]$ to represent the set of users that complete their part in the protocol (surviving users), and $\mathcal{D}_l = [N_l]\backslash \mathcal{U}_l$ to denote the set of dropped users\footnote{
    For modeling the user dropouts, we focus on the worst-case scenario, which is the case 
    when a user drops during the execution of the corresponding group, i.e.,
    when a user receives messages from the previous group but fails to propagate it to the next group. 
}.
We use $\mathbf{x}_{i}^{(l)}$ to denote the local model of user $i$ in group $l\in[L]$, which is a vector of dimension $d$ that corresponds to the parameters of their locally trained model. 
Then, we can rewrite  \eqref{eq:goal_eq} as
\begin{equation}\label{eq:goal_eq2}
\mathbf{z} = \sum_{l\in[L]}\sum_{i\in\mathcal{U}_l} \mathbf{x}_{i}^{(l)}.
\end{equation} 
The elements of $\mathbf{x}_{i}^{(l)}$ and $\mathbf{z}$ are from a finite field $\mathbb{F}_q$ for some field size $q$. 
All operations are carried out over the finite field and we omit the modulo $q$ operation for simplicity.

\begin{figure}
\centering
\includegraphics[width=0.7\linewidth]{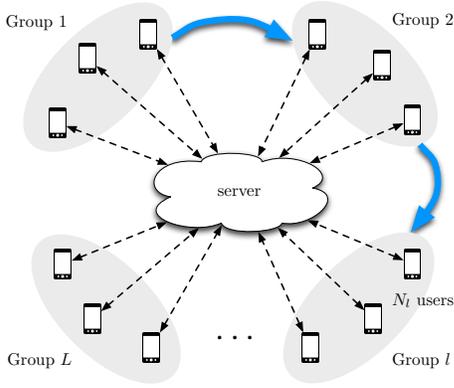}
\vspace{-0.3cm}\caption{Network topology with $N$ users partitioned to $L$ groups, with $N_l$ users in group $l\in[L]$. }
\label{phones}
\end{figure}
%
%
The dashed links in Figure~\ref{phones} represent the communication links between the server and mobile users. In our general description, we assume that all communication takes place through a central server, via creating pairwise secure keys using a Diffie-Hellman type key exchange protocol \cite{diffie1976new} as in \cite{bonawitz2017practical}. 
{\name} can also use decentralized communication architectures with direct links between devices, such as peer-to-peer communication, where users can communicate directly through an underlay communication network~\cite{lalitha2019peer, he2019central}. 
Then, the aggregation steps are the same as the centralized setting except that messages are now communicated via direct links between the users, and a random election algorithm should be carried out to select one user (or multiple users, depending on the application) to aggregate the final sum at the final stage instead of the server. The detailed process of the final stage will be explained in Section~\ref{sec:proposed_finalaggregation}.


{\name} consists of $L$ execution stages performed sequentially. 
At stage $l\in[L]$, users in group $l$ encode their inputs, including their trained models and the partial summation of the models from lower stages, and send them to users in group $l+1$. 
Next, users in group $l+1$ recover (decode) the missing information due to potentially dropped users, and then aggregate the received messages. 
At the end of the protocol, models of all surviving users will be aggregated. 



The proposed coding and aggregation mechanism guarantees that no party (mobile users or the server) can learn an individual model, or a partial aggregate of a subset of models. 
The server learns nothing but the final aggregated model of all surviving users. 
This is achieved by leveraging additive secret sharing to mask the individual models, which we  describe in the following.  

\subsection{Masking with additive secret sharing} 
{\name} hides the individual user models using additive masks to protect their privacy against potential collusions between the interacting parties. This is done by a two-step procedure. In the first step, the server sends a random mask to each user, denoted by a random vector $\mathbf{u}_i^{(l)}$ for user $i\in[N_l]$ at group $l\in[L]$. Each user then masks its local model $\mathbf{x}_{i}^{(l)}$ as  $\mathbf{x}_{i}^{(l)}+\mathbf{u}_i^{(l)}$. Since this random mask is known only by the server and the corresponding user, it protects the privacy of each user against potential collusions between any subset of the  remaining users, as long as the server is honest.
On the other hand, privacy may be breached if the server is adversarial and colludes with a subset of users. The second step of {\name} aims at protecting user privacy against such scenarios. 
In this second step, users generate additive secret sharing of the individual models for privacy protection against potential collusions between the server and the users. To do so, user $i$ in group $l$ sends a masked version of its local model to each user $j$ in group $l+1$, given by
\begin{equation}\label{eq:gen_SS}
    \widetilde{\mathbf{x}}_{i,j}^{(l)} = \mathbf{x}_{i}^{(l)}+\mathbf{u}_i^{(l)} + \mathbf{r}_{i,j}^{(l)},
\end{equation}
for $j\in[N_{l+1}]$, where  $\mathbf{r}_{i,j}^{(l)}$ is a random vector such that $\sum_{j\in[N_{l+1}]}\mathbf{r}_{i,j}^{(l)} = 0$ for all $i\in[N_l]$.
The role of additive secret sharing is not only to mask the model to provide privacy against collusions between the server and the users, but also to maintain the accuracy of aggregation by making the sum of the received data over the users in each group equal to the original data, 
as the vectors $\mathbf{r}^{(l)}_{i,j}$ cancel out. 

In addition, each user holds a variable corresponding to the aggregated masked models from the previous group. 
For user $i$ in group $l$, this variable is represented by $\widetilde{\mathbf{s}}_i^{(l)}$. 
At each stage of {\name}, users in the active group update and propagate these variables to the next group.  
Aggregation of these masked models is defined via the recursive relation, 
\begin{equation}\label{eq:sum_rx_data}
    \widetilde{\mathbf{s}}_{i}^{(l)} = \frac{1}{N_{l-1}}\sum_{j\in[N_{l-1}]}\widetilde{\mathbf{s}}_{j}^{(l-1)} + \sum_{j\in\mathcal{U}_{l-1}} \widetilde{\mathbf{x}}_{j,i}^{(l-1)}
\end{equation}
at user $i$ in group $l>1$, whereas the initial aggregation at group $l=1$ is set as  $\widetilde{\mathbf{s}}_{i}^{(1)}=\mathbf{0}$, for $i\in[N_1]$. 
While computing \eqref{eq:sum_rx_data}, any missing values in $\{\widetilde{\mathbf{s}}_{j}^{(l-1)}\}_{j\in[N_{l-1}]}$ (due to the users dropped in group $l-1$) is reconstructed via the recovery technique presented in Section~\ref{sec:coding}. 

User $i$ in group $l$ then sends the aggregated value in \eqref{eq:sum_rx_data} to each user in group $l+1$.
The average of the aggregated values from the users in group $l$ consists of the models of the users up to group $l-1$, masked by the randomness sent from the server. This can be observed by defining the following partial summation, which can be computed by each user in group $l+1$, 
\vspace{-0.1cm}
\begin{align}
    \mathbf{s}^{(l+1)}
    &=\frac{1}{N_{l}}\sum_{i\in{[N_{l}]}}\widetilde{\mathbf{s}}_{i}^{(l)} \notag \\
    &= \frac{1}{N_{l-1}}\sum_{j\in[N_{l-1}]}\widetilde{\mathbf{s}}_{j}^{(l-1)} + \sum_{j\in\mathcal{U}_{l-1}} \mathbf{x}_j^{(l-1)} + \sum_{j\in\mathcal{U}_{l-1}} \mathbf{u}_j^{(l-1)}\label{eq:sum_equality} \\
    &= \mathbf{s}^{(l)} + \sum_{j\in\mathcal{U}_{l-1}} \mathbf{x}_j^{(l-1)} + \sum_{j\in\mathcal{U}_{l-1}} \mathbf{u}_j^{(l-1)}, \label{eq:sum_update_eq}
\end{align} 
where~\eqref{eq:sum_equality} follows from  $\sum_{j\in[N_{l-1}]}\mathbf{r}_{i,j}^{(l-1)} = 0$. 
With the initial partial summation $\mathbf{s}^{(2)}=\frac{1}{N_{1}}\sum_{i\in{[N_{1}]}}\widetilde{\mathbf{s}}_{i}^{(1)}=\mathbf{0}$, 
one can show that $\mathbf{s}^{(l+1)}$ is equal to the aggregation of the models of all surviving users in up to group $l-1$, masked by the randomness sent from the server,
\vspace{-0.06cm}
\begin{equation}\label{eq:prop_partialsum}
    \mathbf{s}^{(l+1)} = \sum_{m\in[l-1]}\sum_{j\in\mathcal{U}_m}\mathbf{x}^{(m)}_j + \sum_{m\in[l-1]}\sum_{j\in\mathcal{U}_m} \mathbf{u}^{(m)}_j. 
\end{equation}
At the final stage, the server obtains the final aggregate value from \eqref{eq:prop_partialsum} and removes the random masks $\sum_{m\in[L]}\sum_{j\in\mathcal{U}_m} \mathbf{u}^{(m)}_j$. 
This approach works well if no user drops out during the execution of the protocol. 
On the other hand, if any user in group $l+1$ drops out, the random vectors masking the models of the $l$-th group in the summation \eqref{eq:sum_equality} cannot be cancelled out.
In the following, we propose a recovery technique that is robust to dropped or delayed users, based on coding theory principles.

\vspace{-0.2cm}\subsection{Adding redundancies to recover the data of dropped or delayed users} \label{sec:coding}
The main intuition behind our recovery strategy is to encode the additive secret shares (masked models) in a way that guarantees secure aggregation when users are dropped or delayed.
To do so, we leverage Lagrange coding~\cite{yu2018lagrange}, which has been applied to other problems such as offloading or collaborative machine learning in the privacy-preserving manner~\cite{so2019codedprivateml,so2020scalable}.
The primary benefits of Lagrange coding over alternative codes that may also be used for introducing redundancy, such as other error-correcting codes, is that Lagrange coding enables us to perform the aggregation operation on the encoded models, and that the final result can be decoded from the computations performed on the encoded models. This is not necessarily true for other error-correcting codes, as they do not guarantee the recovery of the original computation results (i.e., the computations performed on the true values of the model parameters) from the computations performed on the encoded models.
It encodes a given set of $K$ vectors $(\mathbf{v}_1,\ldots, \mathbf{v}_K)$ by using a Lagrange interpolation polynomial. 
One can view this as embedding a given set of vectors on a Lagrange  polynomial, such that each encoded value represents a point on the polynomial.
The resulting encoding enables a set of users to compute a given polynomial function $h$ on the encoded data in a way that any individual computation $\{h(\mathbf{v}_i)\}_{i\in[K]}$ can be reconstructed
using any subset of $deg(h) (K-1) +1$ other computations. 
The reconstruction is done through polynomial interpolation. 
Therefore, one can reconstruct any missing value as long as a sufficient number of other computations  are available, i.e., enough number of points are available to interpolate the polynomial. In our problem of gradient aggregation, the function of interest, $h$, would be linear and accordingly have degree $1$, since it corresponds to the summation of all individual gradient vectors.

{\name} utilizes Lagrange coding for recovery against user dropouts, via a novel strategy that encodes the secret shared values to compute secure aggregation. More specifically, in {\name}, the encoding is performed as follows.
Initially, user $i$ in group $l$ forms a Lagrange interpolation polynomial 
$f^{(l)}_i:\mathbb{F}_q\rightarrow\mathbb{F}_q^{d}$ of degree $N_{l+1}-1$ such that $f^{(l)}_i(\alpha^{(l+1)}_j)=\widetilde{\mathbf{x}}_{i,j}^{(l)}$ for $j\in[N_{l+1}]$, where $\alpha^{(l+1)}_j$ is an evaluation point allocated to user $j$ in group $l+1$. 
This is accomplished by letting 
\begin{equation*} 
f^{(l)}_i(z)=\sum_{j\in[N_{l+1}]}  \widetilde{\mathbf{x}}_{i,j}^{(l)} \cdot \prod_{k\in [N_{l+1}]\setminus\{j\}}\frac{z - \alpha^{(l+1)}_k}{\alpha^{(l+1)}_j - \alpha^{(l+1)}_k}.
\end{equation*}
Then, another set of $N_{l+1}$ distinct evaluation points 
$\{\beta^{(l+1)}_j\}_{j\in[N_{l+1}]}$ are allocated from $\mathbb{F}_q$ such that $\{\beta^{(l+1)}_j\}_{j\in[N_{l+1}]}\cap\{\alpha^{(l+1)}_j\}_{j\in[N_{l+1}]}$ $=\varnothing$. 
Next, user $i\in[N_{l}]$ in group $l$ generates the encoded model,
\begin{equation}\label{eq:LCC_encoding}
    \bar{\mathbf{x}}^{(l)}_{i,j} = f^{(l)}_i(\beta^{(l+1)}_j),
\end{equation}
and sends $\bar{\mathbf{x}}^{(l)}_{i,j}$ to user $j$ in group $(l+1)$. 
%
In addition, user $i\in[N_{l}]$ in group $l$ aggregates the encoded models $\{\bar{\mathbf{x}}_{j,i}^{(l-1)}\}_{j\in \mathcal{U}_{l-1}}$ received from the previous stage, with the partial summation $\mathbf{s}^{(l)}$ from~\eqref{eq:sum_equality} as 
\begin{align}\label{eq:sum_codedmodel}
    \bar{\mathbf{s}}_{i}^{(l)} 
    =\mathbf{s}^{(l)} + \sum_{j\in\mathcal{U}_{l-1}}\bar{\mathbf{x}}_{j,i}^{(l-1)}.
\end{align}
The summation of the masked models in~\eqref{eq:sum_rx_data} and the summation of the coded models in~\eqref{eq:sum_codedmodel} can be viewed as evaluations of a polynomial $g^{(l)}$ such that
\begin{align}\label{eq:polynomial_g}
    \widetilde{\mathbf{s}}_{i}^{(l)} = g^{(l)}(\alpha^{(l)}_i), \\ 
    \bar{\mathbf{s}}_{i}^{(l)} = g^{(l)}(\beta^{(l)}_i),
\end{align}
for $i\in[N_{l}]$, where $g^{(l)}(z)=\mathbf{s}^{(l)}+\sum_{j\in\mathcal{U}_{l-1}} f^{(l-1)}_j(z)$ is a polynomial function with degree at most $N_{l}-1$. 
Then, user $i\in[N_{l}]$ sends the set of messages $\{\widetilde{\mathbf{x}}_{i,j}^{(l)},\bar{\mathbf{x}}_{i,j}^{(l)},\widetilde{\mathbf{s}}_{i}^{(l)}, \bar{\mathbf{s}}_{i}^{(l)}\}$ to user $j$ in group $l+1$.
%
%

Upon receiving the  messages, user $j$ in group $l+1$ reconstructs the missing terms in  $\{\widetilde{\mathbf{s}}_{i}^{(l)}\}_{i\in[N_{l}]}$ (caused by the dropped users in group $l$), computes the partial sum $\mathbf{s}^{(l+1)}$ from~\eqref{eq:sum_equality}, and  updates the terms $\{\widetilde{\mathbf{s}}_{j}^{(l+1)},\bar{\mathbf{s}}_{j}^{(l+1)}\}$ as in~\eqref{eq:sum_rx_data} and~\eqref{eq:sum_codedmodel}.
Users in group $l+1$ can reconstruct each term in  $\{\widetilde{\mathbf{s}}_{i}^{(l)}\}_{i\in[N_{l}]}$
as long as they receive at least $N_{l}$ evaluations out of $2N_{l}$ evaluations from the users in group $l$.
This is because $\{\widetilde{\mathbf{s}}_{i}^{(l)},\bar{\mathbf{s}}_{i}^{(l)}\}_{i\in[N_{l}]}$ are evaluation points of the polynomial $g^{(l)}$ whose degree is at most $N_l-1$.
As a result, the model can be aggregated at each stage as long as at least half of the users at that stage are not dropped. 
As we will demonstrate in the proof of Theorem~\ref{thm:TA_achieve}, as long as the drop rate of the users is below $50\%$, the fraction of dropped users at all stages will be below half with high probability, hence {\name} can proceed with model aggregation at each stage.

\vspace{-0.2cm}
\subsection{Final aggregation and the overall {\name} protocol}\label{sec:proposed_finalaggregation} \vspace{-0.1cm}
For the final aggregation, we need a dummy stage to securely compute the aggregation of all user models, especially for the privacy of the local models of users in group $L$. To do so, we arbitrarily select a set of users who will receive and aggregate the models sent from the users in group $L$. They can be any surviving user who has participated in the protocol, and will be called user $j\in[N_{final}]$ in the final stage, where $N_{final}$ is the number of users selected.

During this phase, users in group $L$ mask their own model with additive secret sharing by using~\eqref{eq:gen_SS}, generate the encoded data by using~\eqref{eq:LCC_encoding}, and aggregate the models received from the users in group $(L-1)$ by using~\eqref{eq:sum_rx_data} and~\eqref{eq:sum_codedmodel}. Then,  user $i$ from group $L$ sends $\{\widetilde{\mathbf{x}}^{(L)}_{i,j}, \bar{\mathbf{x}}^{(L)}_{i,j}, \widetilde{\mathbf{s}}^{(L)}_{i}, \bar{\mathbf{s}}^{(L)}_{i}\}$ to user $j$ in the final stage. 

Upon receiving the set of messages, user $j\in[N_{final}]$ in the final stage recovers the missing terms in $\{\widetilde{\mathbf{s}}^{(L)}_i\}_{i\in[N_L]}$, and aggregates them with the masked models, 
\begin{align}
    \widetilde{\mathbf{s}}_{j}^{(final)} = \frac{1}{N_{L}}\sum_{i\in[N_{L}]}\widetilde{\mathbf{s}}_{i}^{(L)} + \sum_{i\in\mathcal{U}_{L}}\widetilde{\mathbf{x}}_{i,j}^{(L)} \label{eq:finalaggr},\\ 
    \bar{\mathbf{s}}_{j}^{(final)} = 
    \frac{1}{N_{L}}\sum_{i\in[N_{L}]}\widetilde{\mathbf{s}}_{i}^{(L)} + \sum_{i\in\mathcal{U}_{L}}\bar{\mathbf{x}}_{i,j}^{(L)} , \label{eq:finalaggr_LCC} 
\end{align} 
and sends the resulting $\{\widetilde{\mathbf{s}}_{j}^{(final)}, \bar{\mathbf{s}}_{j}^{(final)}\}$ to the server. 

The server then recovers the summations $\{\widetilde{\mathbf{s}}^{(final)}_j\}_{j\in[N_{final}]}$, by reconstructing any missing terms in~\eqref{eq:finalaggr} using the set of received values~\eqref{eq:finalaggr} and~\eqref{eq:finalaggr_LCC}. 
Finally, the server computes the average of the summations from \eqref{eq:finalaggr} and removes the random masks $\sum_{m\in[L]}\sum_{j\in\mathcal{U}_m}\mathbf{u}^{(m)}_j$ from the aggregate, which, as can be observed  from~\eqref{eq:sum_equality}-\eqref{eq:prop_partialsum},  is equal to the aggregate of the individual models of all surviving users, 
\begin{equation}\label{eq:final_aggregation}
    \frac{1}{N_{final}}\sum_{j\in [N_{final}]}\widetilde{\mathbf{s}}_j^{(final)}  -  \sum_{m\in[L]}\sum_{j\in\mathcal{U}_m}\mathbf{u}^{(m)}_j= \sum_{m\in[L]}\sum_{j\in\mathcal{U}_m}\mathbf{x}^{(m)}_j.
\end{equation} 
Having all above steps, the overall  {\name} protocol is presented in Algorithm~\ref{Alg}.

\begin{algorithm}[t!]
\small
  \caption{{\name}}\label{Alg} 
  \begin{algorithmic}[1]
    \INPUT{Local models $\mathbf{x}_i^{(l)}$ of users $i\in[N_l]$ in group $l\in[L]$.} \
    \OUTPUT{Aggregated model $\sum_{l\in[L]}\sum_{i\in\mathcal{U}_l} \mathbf{x}_i^{(l)}$.} \ 
    \vspace{0.1cm} 
    \FOR{group $l=1,\ldots,L$}
    \FOR{user $i=1,\ldots,N_l$}
    \STATE Compute the masked model $\{\widetilde{\mathbf{x}}_{i,j}^{(l)}\}_{l\in[N_{l+1}]}$ from \eqref{eq:gen_SS}. 
    \STATE Generate the encoded model $\{\bar{\mathbf{x}}_{i,j}^{(l)}\}_{j\in[N_{l+1}]}$ from \eqref{eq:LCC_encoding}. 

    \vspace{0.1cm}\IF{$l=1$} 
    
    \STATE Initialize  $\widetilde{\mathbf{s}}_i^{(1)}=\bar{\mathbf{s}}_i^{(1)} = \mathbf{0}$. 
    \ELSE
    \STATE Reconstruct the missing values in  $\{\widetilde{\mathbf{s}}_k^{(l-1)}\}_{k\in[N_{l-1}]}$ due to the dropped users in group $l-1$. 
    \STATE Update the aggregate value $\widetilde{\mathbf{s}}_i^{(l)}$ from  \eqref{eq:sum_rx_data}. 
    \STATE Compute the coded aggregate value $\bar{\mathbf{s}}_i^{(l)}$ from  \eqref{eq:sum_codedmodel}.     
    \ENDIF
    \STATE Send $\{\widetilde{\mathbf{x}}_{i,j}^{(l)}, \bar{\mathbf{x}}_{i,j}^{(l)},  \widetilde{\mathbf{s}}_i^{(l)}, \bar{\mathbf{s}}_i^{(l)}\}$ to user $j\in[N_{l+1}]$ in group $l+1$ ($j\in[N_{final}]$ if $l=L$).   
    \ENDFOR
    
    \ENDFOR

    \vspace{0.1cm}
    \FOR{user $i=1,\ldots,N_{final}$}
    \STATE Reconstruct the missing values in  $\{\widetilde{\mathbf{s}}_k^{(L)}\}_{k\in[N_L]}$ due to the dropped users in group $L$. 
    \STATE Compute  $\widetilde{\mathbf{s}}_i^{(final)}$ from \eqref{eq:finalaggr} and $\bar{\mathbf{s}}_i^{(final)}$ from \eqref{eq:finalaggr_LCC}.
    
    \STATE Send $\{\widetilde{\mathbf{s}}_i^{(final)}, \bar{\mathbf{s}}_i^{(final)}\}$ to the server. 
    \ENDFOR

    \STATE Server computes the final aggregated model from \eqref{eq:final_aggregation}.

  \end{algorithmic}
\end{algorithm}

\section{Theoretical Guarantees of {\name}}~\label{5-MainResults}
In this section, we formally state our main theoretical result.
\begin{theorem} \label{thm:TA_achieve}
    {\name} can simultaneously achieve: 
    \begin{enumerate}
        \item robustness guarantee to any user dropout rate $p<0.5$, with probability approaching to 1 as the number of users $N \rightarrow \infty$,
        
        \item privacy guarantee against up to $T=(0.5-\epsilon)N$ colluding users,  with probability approaching to 1 as the number of users $N \rightarrow \infty$, and for any $\epsilon>0$,
        
        \item aggregation overhead of $C=O(N\log{N})$.
    \end{enumerate}
\end{theorem}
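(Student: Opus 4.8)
The plan is to fix the group sizes up front and then dispatch the three items in turn. Take $N_l=\Theta(\log N)$ for every $l$, so that the number of groups is $L=\Theta(N/\log N)$. Item 3 is then immediate by direct accounting: across the $L-1$ inter‑group transitions each of the $N_l$ users in group $l$ sends to each of the $N_{l+1}$ users in group $l+1$ a bundle $\{\widetilde{\mathbf{x}}_{i,j}^{(l)},\bar{\mathbf{x}}_{i,j}^{(l)},\widetilde{\mathbf{s}}_i^{(l)},\bar{\mathbf{s}}_i^{(l)}\}$ of size independent of $N$, so the communication is $\sum_l N_l N_{l+1}+O(N)=\Theta(L\log^2 N)+O(N)=O(N\log N)$, including the server's $O(N)$ mask distribution and the final dummy stage; the per‑user Lagrange encoding in \eqref{eq:LCC_encoding} and the interpolation‑based recovery are operations on polynomials of degree $N_l-1=\Theta(\log N)$, which with fast algorithms add only lower‑order (poly‑$\log\log N$) factors, so the total runtime $C$ is $O(N\log N)$.

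For item 1, I would first prove correctness on a favorable realization and then bound the probability of that realization. Call a dropout realization \emph{balanced} if $|\mathcal{U}_l|\ge N_l/2$ in every group $l$ (and likewise at the final stage). On a balanced realization, each user in group $l+1$ receives $\{\widetilde{\mathbf{s}}_i^{(l)},\bar{\mathbf{s}}_i^{(l)}\}$ from at least $N_l/2$ surviving senders, i.e.\ at least $N_l$ evaluations of the polynomial $g^{(l)}$ of degree at most $N_l-1$, hence can interpolate $g^{(l)}$ and recover every missing $\widetilde{\mathbf{s}}_i^{(l)}$; combined with the algebraic identities \eqref{eq:sum_rx_data}, \eqref{eq:sum_update_eq}, \eqref{eq:prop_partialsum}, \eqref{eq:final_aggregation} this shows the server outputs exactly $\sum_{l\in[L]}\sum_{i\in\mathcal{U}_l}\mathbf{x}_i^{(l)}$. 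It remains to show $\Pr[\text{balanced}]\to 1$: users drop independently with probability $p<1/2$, so the number of dropouts in a fixed group is $\mathrm{Binomial}(N_l,p)$ and a Chernoff bound gives $\Pr[\,\ge N_l/2\,]\le e^{-c_p N_l}$ for a constant $c_p>0$ depending only on $p$; a union bound over the $L\le N$ groups bounds the failure probability by $N e^{-c_p N_l}$, which tends to $0$ once the constant hidden in $N_l=\Theta(\log N)$ exceeds $1/c_p$.

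Item 2 follows the same skeleton with a different favorable event: call a partition \emph{good for privacy} if the colluding users form a strict minority of every group. This is again Chernoff‑plus‑union‑bound — a fixed group of size $N_l$ contains $\approx\mathrm{Binomial}(N_l,\tfrac12-\epsilon)$ colluders, so $\Pr[\text{colluders are a majority of some group}]\le N e^{-c_\epsilon N_l}$ for a constant $c_\epsilon>0$ depending only on $\epsilon$, which vanishes for $N_l=\Theta(\log N)$ with a large enough constant. Conditioned on this event I would establish information‑theoretic privacy of every individual honest model through a simulation argument built around a masking lemma: for an honest user $i$ in group $l$, the direct messages $\widetilde{\mathbf{x}}_{i,j}^{(l)}=\mathbf{x}_i^{(l)}+\mathbf{u}_i^{(l)}+\mathbf{r}_{i,j}^{(l)}$ in \eqref{eq:gen_SS} carry fresh additive shares $\mathbf{r}_{i,j}^{(l)}$ (constrained only by $\sum_j\mathbf{r}_{i,j}^{(l)}=0$), at least one of which is sent to an honest recipient and so is hidden from the adversary, plus — when the server is honest — the independent mask $\mathbf{u}_i^{(l)}$; the coded messages $\bar{\mathbf{x}}_{i,j}^{(l)}$ are evaluations of the same degree‑$(N_{l+1}-1)$ polynomial $f_i^{(l)}$, and the adversary holds strictly fewer than $N_{l+1}$ of the relevant $(\widetilde{\mathbf{x}},\bar{\mathbf{x}})$ evaluations (this is exactly where the minority‑in‑every‑group event is used — it keeps the number of such evaluations in adversarial hands below the interpolation threshold), so the adversary cannot reconstruct the honest evaluations. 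Propagating this through the recursive aggregates \eqref{eq:sum_rx_data}, \eqref{eq:sum_codedmodel} and the final stage \eqref{eq:finalaggr}--\eqref{eq:final_aggregation} shows that the joint distribution of everything the colluders observe is unchanged when $\mathbf{x}_i^{(l)}$ is re‑randomized with the enforced sums held fixed, i.e.\ no individual honest model is revealed.

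The routine parts are the overhead count and the two Chernoff/union‑bound estimates. The genuinely delicate step is the privacy simulation: because {\name} transmits, at every stage, both an uncoded copy ($\widetilde{\mathbf{x}},\widetilde{\mathbf{s}}$) and a Lagrange‑coded copy ($\bar{\mathbf{x}},\bar{\mathbf{s}}$) of quantities that are themselves recursively aggregated across groups, one must track the precise linear span of honest inputs and honest randomness that the colluders' messages can generate and verify that the honest randomness always has enough dimension to absorb it; the bookkeeping over all $L$ stages, rather than any single inequality, is the real obstacle.
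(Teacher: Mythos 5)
Your proposal takes essentially the same route as the paper's proof: fix $N_l=\Theta(\log N)$ (the paper uses $N_l=\frac{1}{c}\log N$ with $c=\min\{D(0.5\|p),D(0.5\|\tfrac{T}{N})\}$), prove correctness by Lagrange interpolation whenever every group keeps at least half its users, establish both the robustness and privacy events by a Chernoff/Hoeffding tail bound plus a union bound over the $L=N/N_l$ groups, and count communication ($O(\log^2 N)$ per stage over $L$ stages) and per-user polynomial encoding/decoding for the $O(N\log N)$ overhead. The only minor deviations are that the paper models the per-group colluder count as hypergeometric with parameters $(N,T,N_l)$ (exactly $T$ colluders randomly partitioned, bounded via Hoeffding with the same exponent you get from the binomial approximation), and the paper's privacy argument is the same threshold observation you condition on (breach iff some group contains at least $N_l/2$ colluders) stated informally rather than via the full simulation bookkeeping you flag as the delicate step.
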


\begin{remark}\label{rmk:tradeoff_robustness_privacy} \normalfont
    Theorem~\ref{thm:TA_achieve} states that {\name} can tolerate up to $50\%$ user dropout rate and $\frac{N}{2}$ collusions between the users, simultaneously.
    {\name} can guarantee robustness against an even higher number of user dropouts by sacrificing the privacy guarantee as a trade-off. 
    Specifically, when we generate and communicate $k$ set of evaluation points during Lagrange coding, we can recover the partial aggregations by decoding the polynomial in~\eqref{eq:polynomial_g} as long as each user receives $N_l$ evaluations, i.e., $(1+k)(N_l - p N_l) \geq N_{l}$. 
    As a result, {\name} can tolerate up to a $p < \frac{k}{1+k}$ user dropout rate. 
    On the other hand, the individual models will be revealed whenever $T(k+1) \geq N$.
    In this case, one can guarantee privacy against up to $(\frac{1}{k+1}-\epsilon)N$ colluding users for any $\epsilon > 0$. 
    This demonstrates a trade-off between robustness and privacy guarantees achieved by {\name}, that is, one can increase the robustness guarantee by reducing the privacy guarantee and vice versa. 
\end{remark}

\begin{proof} 
The proof of Theorem~\ref{thm:TA_achieve} is presented in Appendix~C.
\end{proof}

As we showed in the proof of Theorem $1$, {\name} achieves its robustness and privacy guarantees by choosing a group size of  $N_l=\frac{1}{c}\log{N}$ for all $l\in[L]$ where $c\triangleq \min\{D(0.5||p), D(0.5||\frac{T}{N})\}$ and $D(a||b)$ is the Kullback-Leibler (KL) distance between two Bernoulli distributions with parameter $a$ and $b$~\cite{cover2006}.
We can further reduce the aggregation overhead if we choose a smaller group size $N_l$.
However, we cannot further reduce the group size beyond $O(\log{N})$ because when $0<D(0.5||p)<1$ ($\frac{1}{c}>1$) and $N_l = \log{N}$, the probability that {\name} guarantees the accuracy of full model aggregation goes to $0$ with sufficiently large number of users, which is stated in Theorem~\ref{thm:converse_droprate}.

\begin{theorem}{(Converse)}\label{thm:converse_droprate}
    When $0<D(0.5||p)<1$ and $\Nl = \log{N}$ for all $l\in[L]$, the probability that {\name} achieves the robustness guarantee to any user dropout rate $p<0.5$ goes to $0$ as the number of users $N\rightarrow \infty$.
\end{theorem}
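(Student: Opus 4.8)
The plan is to show that when $N_l = \log N$ and $0 < D(0.5\|p) < 1$, the probability that \emph{no} stage ever sees more than half its users drop out tends to $0$. This is the complement of the ``good event'' used in the achievability proof of Theorem~\ref{thm:TA_achieve}, so the converse is really a matching lower bound on the failure probability. First I would fix the setup: the number of groups is $L = N / N_l = N / \log N$, each user independently drops with probability $p < 0.5$, and within group $l$ the number of dropped users $|\mathcal{D}_l|$ is $\mathrm{Binomial}(N_l, p)$. The protocol fails (loses the ability to decode the aggregation polynomial $g^{(l)}$ in~\eqref{eq:polynomial_g}) at stage $l$ whenever $|\mathcal{U}_l| < N_l$ cannot be guaranteed from the $2N_l$ evaluations, i.e.\ whenever more than half the users in that group drop: $|\mathcal{D}_l| > N_l/2$. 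Call this event $A_l$, and let $q_N \triangleq \Pr[A_l] = \Pr[\mathrm{Binomial}(N_l,p) > N_l/2]$.

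The crux is a sharp (Chernoff/Sanov-type) estimate of $q_N$. By the standard tight large-deviation bound for the binomial tail, for $N_l = \log N$ we have $q_N = N^{-D(0.5\|p) + o(1)}$, where $D(0.5\|p) = \tfrac12\log\tfrac{1/2}{p} + \tfrac12\log\tfrac{1/2}{1-p}$; more precisely $q_N \ge c' \, N_l^{-1/2} \, 2^{-N_l D(0.5\|p)} = N^{-D(0.5\|p)}/\mathrm{poly}(\log N)$ using the lower bound on the binomial tail (e.g.\ via Stirling, keeping the polynomial prefactor). Since the $L$ groups are disjoint in their users, the events $A_1,\dots,A_L$ are \emph{independent}. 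Hence the probability that the protocol succeeds is
\begin{equation}
\Pr\Big[\bigcap_{l=1}^{L} A_l^c\Big] = (1 - q_N)^{L} \le \exp(-L\, q_N).
\end{equation}
Now plug in $L = N/\log N$ and the lower bound $q_N \ge N^{-D(0.5\|p)}/\mathrm{poly}(\log N)$:
\begin{equation}
L\, q_N \;\ge\; \frac{N}{\log N}\cdot \frac{N^{-D(0.5\|p)}}{\mathrm{poly}(\log N)} \;=\; \frac{N^{\,1 - D(0.5\|p)}}{\mathrm{poly}(\log N)},
\end{equation}
and since $D(0.5\|p) < 1$ the exponent $1 - D(0.5\|p)$ is strictly positive, so $L\,q_N \to \infty$ and therefore $\Pr[\bigcap_l A_l^c] \le e^{-L q_N} \to 0$ as $N \to \infty$. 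This is exactly the claim: the robustness guarantee fails with probability approaching $1$.

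The main obstacle is getting the binomial tail estimate \emph{tight enough in both directions at the right scale}: a crude Chernoff upper bound on $q_N$ alone is useless here, because the converse needs a matching \emph{lower} bound $q_N = \Omega(N^{-D(0.5\|p)}/\mathrm{poly}(\log N))$ to conclude $L q_N \to \infty$. The cleanest route is to isolate the single dominant term $\binom{N_l}{\lceil N_l/2\rceil} p^{\lceil N_l/2\rceil}(1-p)^{\lfloor N_l/2\rfloor}$ in the tail sum and apply Stirling's approximation to $\binom{N_l}{N_l/2}$, which gives the $2^{N_l H(1/2)} = 2^{N_l}$ factor and a $\Theta(N_l^{-1/2})$ prefactor; combining with $p^{N_l/2}(1-p)^{N_l/2}$ yields $2^{-N_l D(0.5\|p)}$ up to the polynomial-in-$N_l$ prefactor. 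One should also note the role of the hypothesis $D(0.5\|p) > 0$, which is automatic for $p \ne 0.5$ and simply ensures $q_N \to 0$ so that the approximation $(1-q_N)^L \le e^{-L q_N}$ is the relevant regime (otherwise the success probability would already be bounded away from $1$ for a trivial reason). Minor care is also needed at the boundary — integer rounding of $N_l/2$ and the exact decoding threshold ($N_l$ out of $2N_l$ evaluations) — but these only affect constants and the $o(1)$ in the exponent, not the conclusion.
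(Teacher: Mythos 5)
Your proposal is correct and follows essentially the same route as the paper: a lower bound on the per-group binomial tail of order $\mathrm{poly}(\log N)^{-1}N^{-D(0.5\|p)}$, independence of the dropout counts across the $L=N/\log N$ disjoint groups, and the observation that the success probability $(1-q_N)^L$ vanishes precisely because $Lq_N\to\infty$ when $D(0.5\|p)<1$. The only difference is cosmetic: you close with the elementary bound $(1-q_N)^L\le e^{-Lq_N}$, whereas the paper evaluates the same limit by repeated applications of L'Hospital's rule, and your Stirling-based tail lower bound plays the role of the cited binomial bound in the paper.
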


\begin{proof} \normalfont 
    The proof of Theorem~\ref{thm:converse_droprate} is presented in Appendix~D.
\end{proof}

\subsection{Generalized {\name}}
    Theorem~\ref{thm:TA_achieve} states that the privacy of each \emph{individual model} is guaranteed against any collusion between the server and up to $\frac{N}{2}$ users.
    On the other hand, a collusion between the server and a subset of users can reveal the \emph{partial aggregation} of a group of honest users. 
    For instance, 
    a collusion between the server and a user in group $l$ can reveal the  partial aggregation of the models of all users up to group $l-2$, as the colluding server can remove the random masks in~\eqref{eq:prop_partialsum}. 
    However, the privacy protection can be strengthened to guarantee the privacy of \emph{any} partial aggregation, i.e., the aggregate of any subset of user models, with a simple modification.


The modified protocol follows the same steps in  Algorithm~\ref{Alg} except that the random mask $\mathbf{u}_i^{(l)}$ in~\eqref{eq:gen_SS} is generated by each user individually, instead of being generated by the server. At the end of the aggregation phase, the server learns  $\sum_{m\in[L]}\sum_{j\in\mathcal{U}_m}(\mathbf{x}^{(m)}_j+\mathbf{u}^{(m)}_j)$. 
Simultaneously, the protocol executes an additional random partitioning strategy to aggregate the random masks  $\mathbf{u}^{(m)}_j$, at the end of which the server obtains  $\sum_{m\in[L]}\sum_{j\in\mathcal{U}_m} \mathbf{u}^{(m)}_j$ and recovers $\sum_{m\in[L]}\sum_{j\in\mathcal{U}_m}\mathbf{x}^{(m)}_j$. 
In this second partitioning, $N$ users are randomly allocated into $L$ groups with a group size of $N_{l}$. User $i$ in group $l'\in[L]$ then secret shares ${\mathbf{u}}^{(l')}_i$ with the users in group $l'+1$, by generating and sending a secret share denoted by $[{\mathbf{u}}^{(l')}_i]_j$ to user $j$ in group $l'+1$. For secret sharing, we utilize Shamir's  $\frac{N_{l}}{2}$-out-of-$N_l$ secret sharing protocol~\cite{shamir1979share}. 
Let $\mathcal{U}^{'}_{l'}$ denote the surviving users in group $l'$ in the second partitioning. 
User $i$ in group $l'$ then aggregates the received secret shares $\sum_{j\in\mathcal{U}^{'}_{l'-1}}{\big[\mathbf{u}}^{(l'-1)}_j\big]_i$, which in turn is a secret share of  $\sum_{j\in\mathcal{U}^{'}_{l'-1}}{\mathbf{u}}^{(l'-1)}_j$, and sends the sum to the server. 
Finally, the server reconstructs $\sum_{j\in\mathcal{U}^{'}_{l'}}{\mathbf{u}}^{(l')}_j$ for all $l'\in[L]$ and recovers the aggregate of the individual models of all surviving users by subtracting $\big\{ \sum_{j\in\mathcal{U}^{'}_{l'}}{\mathbf{u}}^{(l')}_j \big\}_{l'\in[L]}$ from the aggregate $\sum_{m\in[L]}\sum_{j\in\mathcal{U}_m}(\mathbf{x}^{(m)}_j+\mathbf{u}^{(m)}_j)$. 

In this generalized version of {\name}, the privacy of any partial aggregation, i.e., the aggregate of any subset of user models, can be protected as long as a  collusion between the server and the users does not reveal the aggregation of the random masks, $\sum_{j\in\mathcal{U}_{l}}{\mathbf{u}}^{(l)}_j$ in \eqref{eq:prop_partialsum} for any $l\in[L]$. 
Since there are at least $\frac{N}{2}$ unknown random masks generated by honest users and the server only knows $L=\frac{N}{N_l}$ equations, i.e., $\big\{ \sum_{j\in\mathcal{U}^{'}_{l}}{\mathbf{u}}^{(l)}_j \big\}_{l\in[L]}$,
the server cannot calculate $\sum_{j\in\mathcal{U}_{l}}{\mathbf{u}}^{(l)}_j$ for any $l\in[L]$. 
Therefore, a collusion between the server and users cannot reveal the partial aggregate as they cannot remove the random masks in~\eqref{eq:prop_partialsum}.
We now formally state the privacy guarantee, robustness guarantee, and  aggregation overhead of the generalized {\name} protocol in  Theorem~\ref{col:modifiedTA_achieve}.

\vspace{0.2cm}
\begin{theorem}~\label{col:modifiedTA_achieve}
Generalized {\name} simultaneously achieves 1), 2), and 3) from  Theorem~\ref{thm:TA_achieve}. In addition, it provides privacy guarantee for the partial aggregate of any subset of user models, against any collusion between the server and up to $T=(0.5-\epsilon)N$ users for any $\epsilon>0$, with probability approaching to $1$ as the number of users $N\rightarrow \infty$. 
\end{theorem}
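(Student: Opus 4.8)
\emph{Strategy.} The plan is to reduce items 1)--3) to Theorem~\ref{thm:TA_achieve} and to establish the new partial-aggregation guarantee by a dimension-counting argument over the two independent random partitions. Generalized \name\ differs from Algorithm~\ref{Alg} only in that (i) each mask $\mathbf{u}_i^{(l)}$ in~\eqref{eq:gen_SS} is sampled locally by user $i$ instead of by the server, and (ii) the masks are aggregated by a second, independent random partitioning in which each user secret-shares its own mask among the next group using Shamir's $\tfrac{N_l}{2}$-out-of-$N_l$ scheme and forwards to the server the sum of the shares it receives, so that the server reconstructs the $L$ per-group mask sums $\big\{\sum_{j\in\mathcal{U}'_l}\mathbf{u}_j^{(l)}\big\}_{l\in[L]}$ and adds them to recover $\sum_{m\in[L]}\sum_{j\in\mathcal{U}_m}\mathbf{u}_j^{(m)}$. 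Modification (i) changes neither the value removed by the server at the end nor the distribution of any message of the original protocol, and (ii) is a single additional communication round over groups of size $N_l=\Theta(\log N)$, contributing only $O(N\log N)$ extra to the aggregation overhead; hence item 3) holds, and correctness is preserved as long as each group of the second partition has at least $N_l/2$ surviving users.

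\emph{Reduction of 1) and 2).} As in the proof of Theorem~\ref{thm:TA_achieve}, set $N_l=\tfrac1c\log N$ with $c=\min\{D(0.5\|p),\,D(0.5\|\tfrac{T}{N})\}$, apply a Chernoff bound to the number of dropped users and to the number of colluding users in each group, and take a union bound over the $2L=o(N)$ groups of the two partitions. This produces an event $\mathcal{E}$ with $\Pr[\mathcal{E}]\to 1$ on which every group of both partitions has fewer than $N_l/2$ dropped users and fewer than $N_l/2$ colluding users. On $\mathcal{E}$ the Lagrange recovery of the first pass and the Shamir recovery of the second pass both succeed (each needs only $N_l/2$ live evaluations, resp.\ shares), which gives item 1); and on $\mathcal{E}$, in the first pass the additive shares $\{\mathbf{r}_{i,j}^{(l)}\}$ together with the masks make every message seen by the coalition either uniformly random or a masked prefix sum exactly as in Theorem~\ref{thm:TA_achieve}, while in the second pass the Shamir shares reveal nothing about any individual mask --- which gives item 2).

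\emph{Privacy of arbitrary partial aggregates.} Condition on $\mathcal{E}$, let $\mathcal{A}$ be the coalition of the server and the colluding set $\mathcal{T}$ with $|\mathcal{T}|\le T=(0.5-\epsilon)N$, and write $H$ for the honest surviving users, so $|H|\ge(0.5+\epsilon)N$. By the analysis of the first pass, the only input-dependent quantities in $\mathcal{A}$'s view are the honest total $\sum_{j\in H}\mathbf{x}_j$ (the leakage permitted by the ideal functionality) and the $O(L)$ masked prefix sums $\mathbf{s}^{(l+1)}=\sum_{j\in P_l}\mathbf{x}_j+\sum_{j\in P_l}\mathbf{u}_j$ of~\eqref{eq:prop_partialsum}, where $P_l$ is the union of the first $l-1$ groups of the first partition; from the second pass $\mathcal{A}$ additionally learns exactly the $L$ linear functionals $\mathbf{U}\mapsto\sum_{j\in\mathcal{U}'_l\cap H}\mathbf{u}_j$ of the honest mask vector $\mathbf{U}=(\mathbf{u}_j)_{j\in H}$, whose span is the set of functionals constant on each block $\mathcal{U}'_l\cap H$. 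Since $|H|\ge(0.5+\epsilon)N$ while the number of these functionals is only $L=N/N_l=o(N)$, and since the two partitions are independent, for each fixed $l$ the indicator of $P_l\cap H$ lies in that span only with probability $N^{-\omega(1)}$ --- a union of the size-$N_l$ second-partition blocks can match $P_l\cap H$ only if no such block straddles the boundary of $P_l$. Union-bounding over the $O(L)$ prefixes, with probability $\to 1$ none of the prefix mask sums $\sum_{j\in P_l}\mathbf{u}_j$ is determined by $\mathcal{A}$'s view, so every masked prefix sum it observes is statistically independent of the corresponding input prefix sum. It follows that $\mathcal{A}$'s view is independent of $(\mathbf{x}_j)_{j\in H}$ conditioned on their sum $\sum_{j\in H}\mathbf{x}_j$; in particular, no partial aggregate $\sum_{j\in S}\mathbf{x}_j$ over a proper nonempty subset $S$ of $H$ is revealed, which is the stated guarantee.

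\emph{Main obstacle.} The technical crux is the last step: one must show that the $L$ group-sum equations leaked by the Shamir pass cannot be linearly combined to reconstruct any of the first-pass prefix mask sums that the circular structure exposes, uniformly over all those prefixes and with probability tending to $1$, while accounting for the two partitions being correlated through the common (random) set of dropped and colluding users. Making the ``no straddling block'' estimate precise, and ruling out the degenerate coalitions that occupy $\ge N_l/2$ slots of some group, is where the real work lies; everything else follows from the Chernoff/Kullback--Leibler machinery already developed for Theorem~\ref{thm:TA_achieve}.
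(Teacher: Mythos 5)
Your reductions of items 1)--3) are fine and essentially match the paper (Chernoff/hypergeometric tail bounds per group, union bound over the $2L$ groups of both partitions, and the observation that the extra Shamir round over groups of size $N_l=\Theta(\log N)$ only adds $O(N\log N)$ overhead). The gap is in the partial-aggregation argument. Your span argument only checks, for each \emph{prefix} $P_l$, whether $\mathbf{1}_{P_l\cap H}$ lies in the span of the second-partition block functionals, and then jumps to ``the view is independent of the honest inputs given their sum.'' That inference is invalid: the coalition can take \emph{differences} of consecutive masked prefix sums to isolate the per-group masked aggregates $\sum_{j\in\mathcal{U}_l}(\mathbf{x}_j^{(l)}+\mathbf{u}_j^{(l)})$, so what must be ruled out is every nontrivial linear combination of the first-partition group indicators (restricted to $H$), other than multiples of $\mathbf{1}_H$, falling in that span --- not just the prefix indicators themselves. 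The dominant leakage event is exactly of this differenced form and is the one the paper's proof bounds: if the surviving set $\mathcal{U}'_{l'}$ of some second-partition group coincides with $\mathcal{U}_l$ for some first-partition group $l$, the server learns $\sum_{j\in\mathcal{U}_l}\mathbf{u}_j^{(l)}$ from the second pass and, colluding with one user in group $l+1$ and one in group $l+2$, recovers $\sum_{j\in\mathcal{U}_l}\mathbf{x}_j^{(l)}$ from~\eqref{eq:prop_partialsum}. Your union bound over the $O(L)$ prefixes never sees this event, so the per-prefix ``no straddling block'' estimate, even if made precise, does not yield the claimed conclusion. Moreover, you explicitly defer that estimate (``where the real work lies''), so the crucial probabilistic step is not actually carried out.

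For contrast, the paper's proof does not attempt a full span/dimension argument. It isolates two bad events and bounds each directly: (i) exact coincidence of a second-partition surviving group with a first-partition surviving group, whose probability is at most $\frac{N^2}{N_l^2}\binom{N}{N_l}^{-1}\to 0$ for $N_l=O(\log N)$; and (ii) some second-partition group containing at least $N_l/2$ colluders, so that Shamir shares reveal individual masks, bounded by the hypergeometric tail in~\eqref{eq:wassilybound}. Robustness follows by doubling the failure bound of Theorem~\ref{thm:TA_achieve} (both partitions must survive), and the overhead by a four-part accounting of the Shamir sharing, share transmission, share aggregation, and server-side reconstruction, each $O(N\log N)$. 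If you want to pursue your linear-algebraic route, you would need to prove that with high probability no nontrivial combination of first-partition group indicators is constant on all second-partition blocks (e.g., via a connectivity argument showing each block straddles many first-partition groups), and handle the correlation of the two partitions through the common dropout/collusion sets; as written, your proposal neither states nor establishes this.
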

\begin{proof}
    The proof of Theorem~\ref{col:modifiedTA_achieve} is presented in Appendix~E.
\end{proof}

\section{Experiments}~\label{6-Experimental Evaluation}
In this section, we evaluate the performance of {\name} by experiments over up to $N=200$ users for various user dropout rates and bandwidth conditions.
%
\subsection{Experiment setup}
\noindent \textbf{Platform.} 
In our experiments, we implement {\name} on a distributed platform by using \texttt{FedML} library \cite{he2020fedml}, and examine its total running time with respect to the state-of-the-art~\cite{bonawitz2017practical}.
Computation is performed in a distributed network over the Amazon EC2 cloud using \texttt{m3.medium} machine instances. 
Communication is implemented using the {\tt MPI4Py}~\cite{dalcin2011parallel} message passing interface on {\tt Python}.
The default setting for the maximum bandwidth constraint of \texttt{m3.medium} machine instances is $1 Gbps$.
The model size, $d$, is fixed to $100,\!000$ with $32$ bit entries, and the field size, $q$, is set as the largest prime within $32$ bits.
We summarize the simulation parameters in Table~\ref{tbl:simul_params}.

\begin{table}[t!]
\small
\caption{Summary of simulation parameters. }
\label{tbl:simul_params}
\begin{center}
\begin{tabular}{ccc}
\toprule
Variable & Definition & Value \\
\midrule
N & \hspace{-0.4cm} number of users & 4 $\sim$ 200 \\
d & \hspace{-0.4cm} model size (32 bit entries) & 100000 \\
p & \hspace{-0.4cm} dropout rate & 10\%, 30\%, 50\%  \\
q & \hspace{-0.4cm} field size & $2^{32} - 5$ \\
  & \hspace{-0.4cm} maximum bandwidth constraint & 100Mbps $\sim$ 1Gbps  \\
\bottomrule
\end{tabular}
\end{center}
\end{table}

\begin{figure}%
    \centering
    \subfloat[
    {\name}. Each arrow is carried out sequentially. {\name} requires $7$ stages.
    ]{{\includegraphics[width=0.8\linewidth]{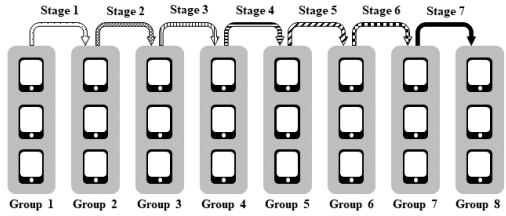}
    \label{fig:TA_and_TA+} }}%
    \qquad
    \subfloat[
    {\name}$+$. Arrows for the same execution stage are carried out simultaneously. {\name}$+$ requires only $3$ stages.
    ]{{\includegraphics[width=0.8\linewidth]{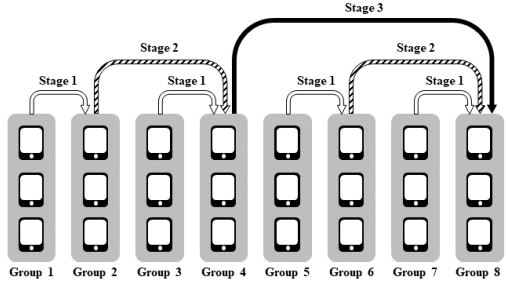} \label{fig:TA++}}}%
    \caption{
    Example networks with $N=24$, $N_l=3$ and $L=8$.
    An arrow represents that users in one group generate and send messages to the users in the next group.
    }%
    \label{fig:network_example}%
\vspace{-0.2cm}
\end{figure}
\begin{figure}
\centering
\includegraphics[width=0.85\linewidth]{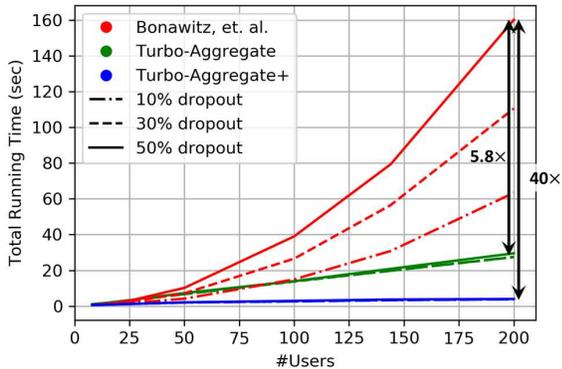}
\vspace{-0.25cm}\caption{Total running time of {\name} versus the benchmark protocol  \cite{bonawitz2017practical} as the number of users increases, for various user dropout rates.}
\label{fig:trainingtimeSec5}
\vspace{-0.4cm}\end{figure}

\noindent \textbf{Modeling user dropouts.}
To model the dropped users in {\name}, we randomly select $pN_{l}$ users out of $N_l$ users in group $l\in[L]$ where $p$ is the  dropout rate. 
We consider the worst case scenario where the selected users drop after receiving the messages sent from the previous group (users in group $l-1$) and do not send their messages to users in group $l+1$. 
To model the dropped users in the benchmark protocol, we follow the scenario in~\cite{bonawitz2017practical}. We randomly select $pN$ users out of $N$ users, which artificially drop after sending their masked models. In this case, the server has to reconstruct the pairwise seeds of the dropped users and execute a pseudo random generator using the reconstructed seeds to remove the random masks (the details are provided  in  Appendix~A).

\noindent \textbf{Implemented Schemes.}
We implement the following schemes for performance evaluation.
{\color{Mycolor3} For the schemes with {\name}, we use $\Nl=\log N$.}
\begin{enumerate}
    \item \textbf{{\name}}: 
    For our first implementation, we directly implement {\name} as described in Section~\ref{3-Algorithm}, where the $L$ execution stages are performed sequentially.

    \item \textbf{{\name}$+$}: We can speed up {\name} by parallelizing the $L$ execution stages. 
    To do so, we again utilize the circular aggregation topology but leverage a tree structure for flooding the information between different groups across the network, which reduces the required number of execution stages from $L-1$ to $\log{L}$. We refer to this implementation as {\name}$+$. 
    Figure~\ref{fig:network_example} demonstrates the difference between {\name}$+$ and {\name} through an example network of $N=24$ users and $L=8$ groups.
    {\name}$+$ requires only $3$ stages to complete the protocol while {\name} carries out each execution stage sequentially and requires $7$ stages.

    \item \textbf{Benchmark}: 
    We implement the benchmark protocol \cite{bonawitz2017practical} where a server mediates the communication between users to exchange the  information required for key agreements (rounds of advertising and sharing keys) and users send their masked models to the server (masked input collection). 
    One can also speed up the rounds of advertising and sharing keys by allowing users to communicate in parallel. 
    However, this has minimal effect on the total running time of the protocol, as the total running time is dominated by the overhead when the server generates the pairwise masks~\cite{bonawitz2017practical}. 
\end{enumerate}

\subsection{Performance evaluation}

For performance analysis, we measure the total running time for a single round of secure aggregation with each protocol while increasing the number of users $N$ gradually for different user dropout rates. 
We use synthesized vectors for locally trained models and do not include the local training time in the total running time. One can also consider the entire learning process and since all other steps remain the same for the three schemes, we expect the same speedup in the aggregation phase.
Our results are demonstrated in Figure~\ref{fig:trainingtimeSec5}. We make the following key observations.

\begin{itemize}
    \item Total running time of {\name} and {\name}$+$ are almost linear in the number of users, while for the benchmark protocol, the total running time is quadratic in the number of users.

    \item {\name} and {\name}$+$ provide a stable total running time as the user dropout rate increases. This is because the encoding and decoding time of {\name} do not change significantly when the dropout rate increases, and we do not require additional information to be transmitted from the remaining users when some users are dropped or delayed.
    On the other hand, for the benchmark protocol, the running time significantly increases as the dropout rate increases. This is because the total running time is dominated by the reconstruction of pairwise masks at the server, which substantially increases as the number of dropped users increases. 
    \item {\name} and {\name}$+$ provide a speedup of up to $5.8\times$ and $40\times$ over the benchmark, respectively, for a user dropout rate of up to $50\%$ with $N=200$ users. 
    This gain is expected to increase further as the number of users increases. 
\end{itemize} 

To illustrate the impact of user dropouts, we present the breakdown of the total running time of the three schemes and the corresponding observations in Appendix~F1.
We further study the impact of the bandwidth, by measuring the total running time with various communication bandwidth constraints.
{\name} provides substantial gain over the state-of-the-art in environments with more severe bandwidth constraints.
The details of these additional experiments are presented in Appendix~F2.

In this section, we have primarily focused on the aggregation phase and measured a single round of the secure aggregation phase with synthesized vectors for the locally trained models.
This is due to the fact that these vectors can be replaced with any trained model using the real world federated learning setups. 
We further investigate the performance of {\name} in real world federated learning setups by implementing both training phase and aggregation phase.
{\name} still provides substantial speedup over the benchmark, which is detailed in Appendix F3.

\section{Conclusion}~\label{7-Conclusion}
This paper presents the first secure aggregation framework that theoretically achieves an aggregation overhead of $O(N\log{N})$ in a network with $N$ users, as opposed to the prior $O(N^2)$ overhead, while tolerating up to a user dropout rate of $50\%$. Furthermore, via experiments over Amazon EC2, we demonstrated that {\name} achieves a total running time that grows almost linearly in the number of users, and provides up to
$40\times$ speedup over the state-of-the-art scheme with $N=200$ users. 

{\name} is particularly suitable for wireless topologies, in which network conditions and user availability can vary rapidly, as {\name} can provide a resilient framework to handle such unreliable network conditions. 
Specifically, if some users cause unexpected delays due to unstable connection, {\name} can simply treat them as user dropouts and can reconstruct the information of dropped or delayed users in the previous groups as long as half of the users remain.
One may also leverage the geographic heterogeneity of wireless networks to better form the communication groups in {\name}. An interesting future direction would be to explore how to optimize the multi-group communication structure of {\name} based on the specific topology of the users, as well as the network conditions.

In this work, we have focused on protecting the privacy of individual models against an  honest-but-curious server and up to $T$ colluding users so that no information is revealed about the individual models beyond their aggregated value.
If one would like to further limit the information that may be revealed from the aggregated model, differential privacy can be utilized to ensure that the individual data points cannot be identified from the aggregated model.
All the benefits of differential privacy could be applied to our approach by adding noise to the local models before the aggregation phase in Turbo-Aggregate. Combining these two techniques is another interesting future direction. 

Finally, the implementation of Turbo-Aggregate in a real-world large-scale distributed system would be another interesting future direction. This would require addressing the following three challenges. 
First, the computation complexity of implementing the random grouping strategy may increase as the number of users increases. 
Second, Turbo-Aggregate currently focuses on protecting the privacy against honest-but-curious adversaries. In settings  with malicious (Byzantine) adversaries who wish to manipulate the global model by poisoning their local datasets, one may require additional strategies to protect the resilience of the trained model.  
One approach is combining secure aggregation with an outlier detection algorithm as proposed in \cite{so2020byzantine}, which has a communication cost of $O(N^2)$ that limits its scalability to large federated learning systems. It would be an interesting direction to leverage Turbo-Aggregate to address this challenge, i.e., develop a communication-efficient secure aggregation strategy against Byzantine adversaries. 
Third, communication may still be a bottleneck in severely resource-constrained systems since users need to exchange the masked models with each other, whose size is as large as the size of the global model.
To overcome this bottleneck, one may leverage model compression techniques or group knowledge transfer~\cite{he2020group}.

\nocite{Wassily1963Bound,kedlaya2011fast}
\bibliographystyle{IEEEtran}
\bibliography{Main.bbl}

\clearpage

\onecolumn

\appendix
\section{Supplementary Materials}

\subsection{State-of-the-Art of Secure Aggregation}~\label{app:benchmark}
A centralized secure aggregation protocol for federated learning has been proposed by [3], where each mobile user locally trains a model. The local models are securely aggregated through a central server, who then updates the global  model. For secure aggregation, users create pairwise keys through a key exchange protocol, such as [46], then utilize them to communicate messages securely with other users, while all communication is forwarded through the server. Privacy of individual models is provided by pairwise random masking. Specifically, each pair of users $u,v\in[N]$ first agree on a pairwise random seed $s_{u,v}$. In addition, user $u$ creates a private random seed $b_u$. 
The role of $b_u$ is to prevent the privacy breaches that may occur if  user $u$ is only delayed instead of dropped (or declared as dropped by a malicious server), in which case the pairwise masks alone are not sufficient for privacy protection. 
User $u\in[N]$ then masks its model $\mathbf{x}_u$,
\begin{equation}\label{eq:google_modelmasking}
    \mathbf{y}_u \!=\! \mathbf{x}_u + \text{PRG}(b_{u})+\!\!\sum_{v:u<v}\!\text{PRG}(s_{u,v}) -\!\!  \sum_{v:u>v}\!\!\text{PRG}(s_{v,u})
\end{equation}
where $\text{PRG}$ is a pseudo random generator, and sends it to the server. 
Finally, user $u$ secret shares $b_u$ as well as $\{s_{u,v}\}_{v\in[N]}$ with the other users, via Shamir's secret sharing\footnote{The state-of-the art protocol utilizes Shamir's secret sharing to provide the robustness guarantee and privacy guarantee. It can achieve robustness guarantee to user dropout rate of up to $p=0.5$ while providing privacy guarantee against up to $T=\frac{N}{2}$ colluding users by setting the parameter $t=\frac{N}{2}+1$ in Shamir's $t$-out-of-$N$ secret sharing protocol. This allows each user to split its own random seeds into $N$ shares such that any $t$ shares can be used to reconstruct the seeds, but any set of at most $t-1$ shares reveals no information about the seeds.} [18].
From a subset of users who survived the previous stage, the server collects either the shares of the pairwise seeds belonging to a dropped user, or the shares of the private seed belonging to a surviving user (but not both). 
Using the collected shares, the server reconstructs the private seed of each surviving user, and the pairwise seeds of each dropped user, to be removed from the aggregate of the masked models. The server then computes the aggregated model,
\begin{align} \label{eq:secure_aggregation}
\mathbf{z} &= \sum_{u\in\mathcal{U}}\big(\mathbf{y}_u - \text{PRG}(b_u)\big)   -\sum_{u\in\mathcal{D}}\Big(\sum_{v:u<v}\text{PRG}(s_{u,v}) - \sum_{v:u>v}\text{PRG}(s_{v,u})\Big)
= \sum_{u\in\mathcal{U}}\mathbf{x}_u 
\end{align} 
where $\mathcal{U}$ and $\mathcal{D}$ represent the set of surviving and dropped users, respectively.

A major bottleneck in scaling secure aggregation to a large number of users is the aggregation overhead, which is quadratic in the number of users (i.e., $O(N^2)$). 
This quadratic aggregation overhead results from the fact that the server has to  reconstruct and remove the pairwise random masks corresponding to dropped users. 
In order to recover the random masks of dropped users, the server has to execute a pseudo random generator based on the recovered seeds $s_{u,v}$, which has a quadratic computation overhead as the number of pairwise masks is quadratic in the number of users, which dominates the overall time consumed in the protocol. This quadratic aggregation overhead severely limits the network size for real-world applications [10].

\subsection{An Illustrative Example}~\label{app:illustrative_example}
We next demonstrate the execution of {\name} through an illustrative example. 
Consider the network in Figure~\ref{example_fig1} with $N=9$ users partitioned into $L=3$ groups with  $N_l=3~(l\in[3])$ users per group, and assume that user $3$ in group $2$ drops during protocol execution.

{\color{Mycolor1} In the first stage, user $i\in[3]$ in group $1$ masks its model $\mathbf{x}^{(1)}_i$ using additive masking as in~(5) and computes  $\{\widetilde{\mathbf{x}}_{i,j}^{(1)}\}_{j\in[3]}$. }
Then, the user  generates the encoded models, $\{\bar{\mathbf{x}}^{(1)}_{i,j}\}_{j\in[3]}$, by using the Lagrange polynomial from~(10).  
Finally, the user initializes
$\widetilde{\mathbf{s}}^{(1)}_i = \bar{\mathbf{s}}^{(1)}_i = \mathbf{0}$,  
and sends  $\{\widetilde{\mathbf{x}}^{(1)}_{i,j}, \bar{\mathbf{x}}^{(1)}_{i,j}, \widetilde{\mathbf{s}}^{(1)}_i, \bar{\mathbf{s}}^{(1)}_i\}$ to user $j\in[3]$ in group $2$. 
Figure~\ref{example_fig2} demonstrates this stage for one user. 

\begin{figure}[t]
\centering
\includegraphics[width=0.55\linewidth]{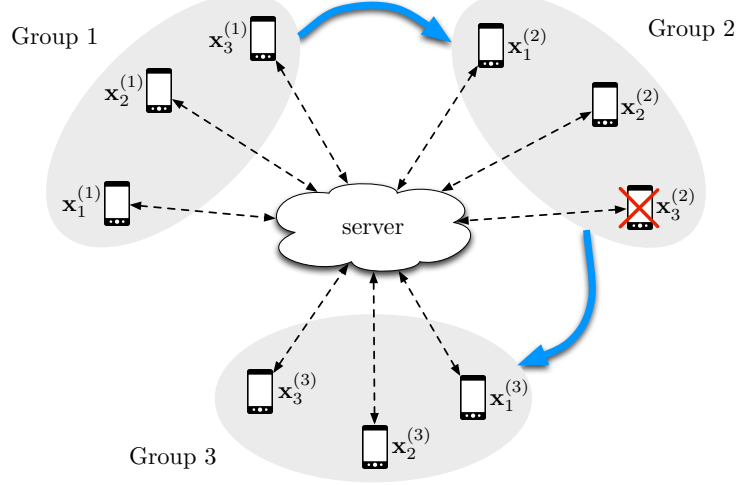}
\caption{Example with $N=9$ users and $L=3$ groups, with  $3$ users per group. User $3$ in group $2$ drops during protocol execution.}
\label{example_fig1}
\vspace{-0.2cm}\end{figure}

In the second stage, user $j\in[3]$ in group $2$ generates the masked models $\{\widetilde{\mathbf{x}}^{(2)}_{j,k}\}_{k\in[3]}$, and the coded models  $\{\bar{\mathbf{x}}^{(2)}_{j,k}\}_{k\in[3]}$,  by using~(5) and~(10), respectively. 
Next, the user aggregates the messages received from group $1$, by computing, 
$\widetilde{\mathbf{s}}^{(2)}_j = \frac{1}{3}\sum_{i\in[3]} \widetilde{\mathbf{s}}^{(1)}_i + \sum_{i\in[3]}\widetilde{\mathbf{x}}^{(1)}_{i,j}$ and 
$\bar{\mathbf{s}}^{(2)}_j = \frac{1}{3}\sum_{i\in[3]} \widetilde{\mathbf{s}}^{(1)}_i + \sum_{i\in[3]}\bar{\mathbf{x}}^{(1)}_{i,j}$. 
Figure~\ref{example_fig3} shows this aggregation phase for one user. 
Finally, user $j$ sends $\{\widetilde{\mathbf{x}}^{(2)}_{j,k}, \bar{\mathbf{x}}^{(2)}_{j,k}, \widetilde{\mathbf{s}}^{(2)}_j, \bar{\mathbf{s}}^{(2)}_j\}$ to user $k\in[3]$ in group $3$. 
However, user $3$ (in group $2$) drops out during the execution of this stage and fails to complete its part.

\begin{figure}[t]
\centering
\includegraphics[width=0.55\linewidth]{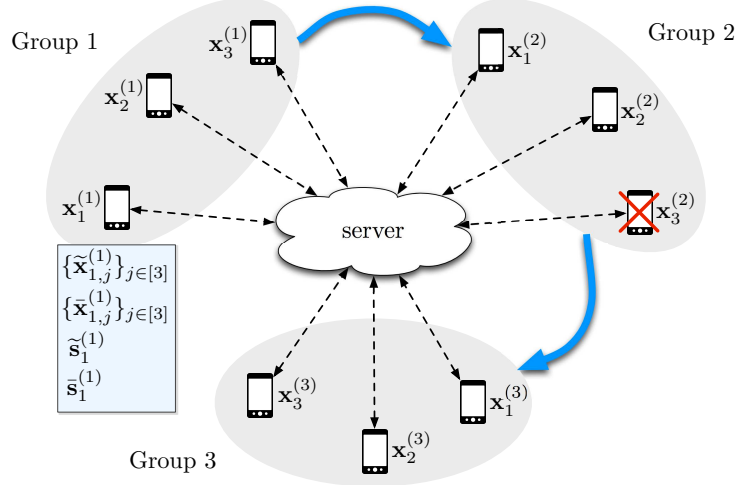}
\caption{Illustration of the computations performed by user $1$ in group $1$, who then sends  $\{\widetilde{\mathbf{x}}^{(1)}_{1,j}, \bar{\mathbf{x}}^{(1)}_{1,j}, \widetilde{\mathbf{s}}^{(1)}_1, \bar{\mathbf{s}}^{(1)}_1\}$ to user $j\in[3]$ in group $2$ (using pairwise keys through the server). }
\label{example_fig2}
\vspace{-0.4cm}\end{figure}

\begin{figure}[t]
\centering
\includegraphics[width=0.7\linewidth]{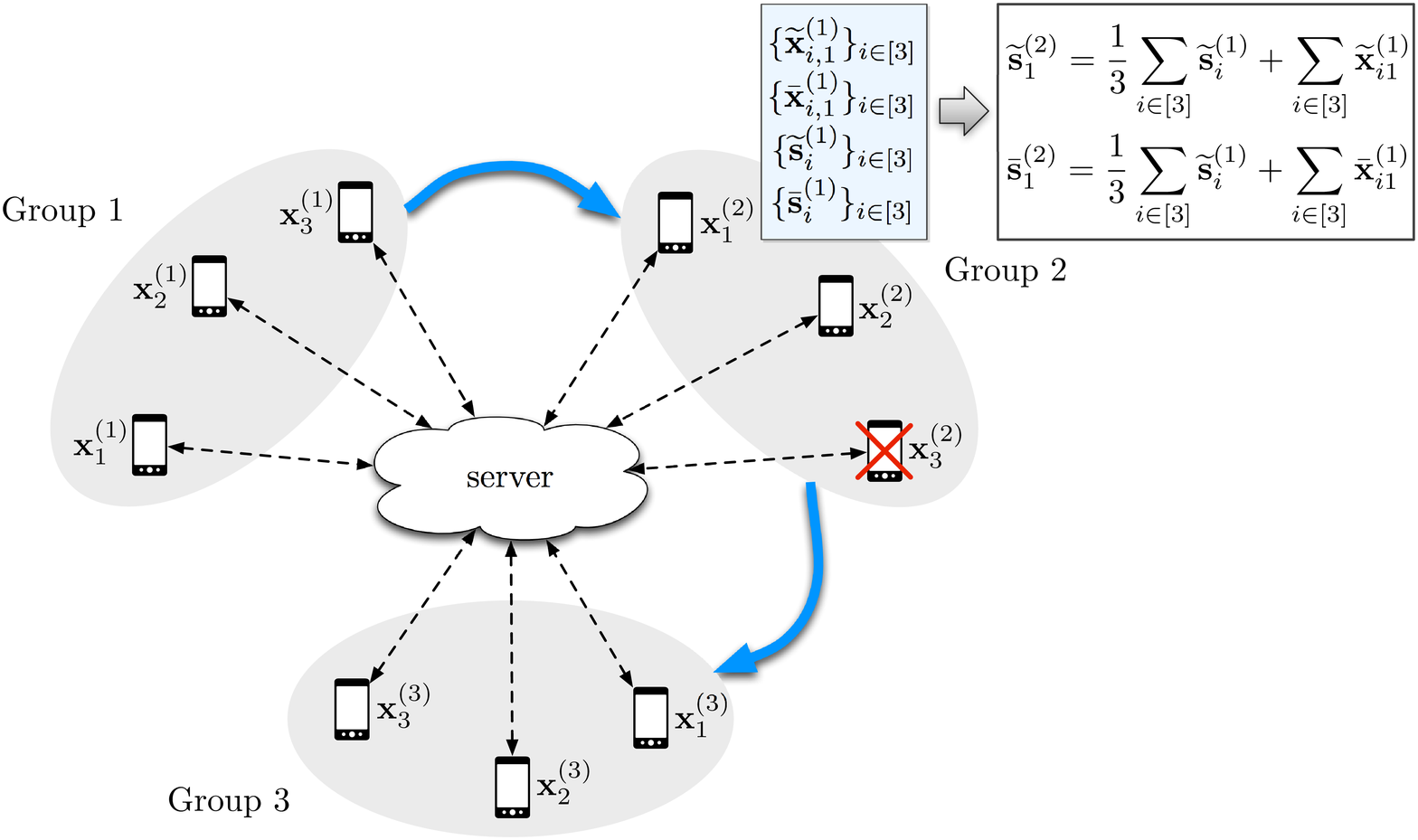}
\caption{The aggregation phase for user $1$ in group $2$. After receiving the set of messages $\big\{\widetilde{\mathbf{x}}_{i,1}^{(1)}, \bar{\mathbf{x}}_{i,1}^{(1)},  \widetilde{\mathbf{s}}_i^{(1)}, \bar{\mathbf{s}}_i^{(1)}\big\}_{i\in[3]}$ from the previous stage, the user computes the aggregated values $\widetilde{\mathbf{s}}_1^{(2)}$ and $\bar{\mathbf{s}}_1^{(2)}$ (note that this is an aggregation of the masked values).}
\label{example_fig3}
\vspace{-0.2cm}\end{figure}

In the third stage, user $k\in[3]$ in group $3$ generates the masked models $\{\widetilde{\mathbf{x}}_{k,t}^{(3)}\}_{t\in[3]}$ and the coded models $\{\bar{\mathbf{x}}_{k,t}^{(3)}\}_{t\in[3]}$.
Then, the user runs a recovery phase due to the dropped user in group $2$. 
This is facilitated by the Lagrange coding technique.  Specifically, user $k$ can decode the missing value $\widetilde{\mathbf{s}}^{(2)}_3=g^{(2)}(\alpha^{(2)}_{3})$ due to the dropped user, by using the four evaluations 
$\{\widetilde{\mathbf{s}}^{(2)}_1,\bar{\mathbf{s}}^{(2)}_1,\widetilde{\mathbf{s}}^{(2)}_2,\bar{\mathbf{s}}^{(2)}_2\} 
= \{g^{(2)}(\alpha^{(2)}_{1}),g^{(2)}(\beta^{(2)}_{1}),g^{(2)}(\alpha^{(2)}_{2}),g^{(2)}(\beta^{(2)}_{2})\}$ 
received from the remaining users in group $2$. 
Then, user $k$ aggregates the received and reconstructed values by computing      
$\widetilde{\mathbf{s}}^{(3)}_k =\frac{1}{3}\sum_{j\in[3]}  \widetilde{\mathbf{s}}^{(2)}_j + \sum_{j\in[2]}\widetilde{\mathbf{x}}^{(2)}_{j,k}$ and 
$\bar{\mathbf{s}}^{(3)}_k =\frac{1}{3}\sum_{j\in[3]}\widetilde{\mathbf{s}}^{(2)}_j + \sum_{j\in[2]}\bar{\mathbf{x}}^{(2)}_{j,k}$.

In the final stage, {\name} selects a set of surviving users to aggregate the models of group $3$. Without loss of generality, we assume these are the users in group $1$. 
Next, user $k\in[3]$ from group $3$ sends  $\{\widetilde{\mathbf{x}}^{(3)}_{k,t}, \bar{\mathbf{x}}^{(3)}_{k,t}, \widetilde{\mathbf{s}}^{(3)}_k, \bar{\mathbf{s}}^{(3)}_k\}$ to user $t\in[3]$ in the final group. 
Then, user $t$ computes the aggregation, 
$\widetilde{\mathbf{s}}^{(final)}_t = \frac{1}{3}\sum_{k\in[3]} \widetilde{\mathbf{s}}^{(3)}_k + \sum_{k\in[3]}\widetilde{\mathbf{x}}^{(3)}_{k,t}$ 
and 
$\bar{\mathbf{s}}^{(final)}_t = \frac{1}{3}\sum_{k\in[3]} \widetilde{\mathbf{s}}^{(3)}_k  + \sum_{k\in[3]}\bar{\mathbf{x}}^{(3)}_{k,t}$, and sends $\{\widetilde{\mathbf{s}}^{(final)}_t, \bar{\mathbf{s}}^{(final)}_t\}$ to the server.

Finally, the server computes the average of the summations received from the final group and removes the added randomness, 
which is equal to the aggregate of the original models of the surviving users.
\begin{align*}
    &\frac{1}{3}\sum_{t\in[3]} \widetilde{\mathbf{s}}^{(final)}_t  - \sum_{i\in[3]}\mathbf{u}^{(3)}_i - \sum_{j\in[2]}\mathbf{u}^{(2)}_j - \sum_{k\in[3]}\mathbf{u}^{(3)}_k 
     = \sum_{i\in[3]} \mathbf{x}^{(1)}_i + \sum_{j\in[2]}\mathbf{x}^{(2)}_j + \sum_{k\in[3]}\mathbf{x}^{(3)}_k. 
\end{align*} 

\subsection{Proof of Theorem~\ref{thm:TA_achieve}}~\label{app:proof_thm}
\noindent
As described in Section~\RNum{4}-A, {\name} first employs an unbiased random partitioning strategy to allocate the $N$ users into $L=\frac{N}{N_l}$ groups with a group size of $N_l$ for all $l\in[L]$. 
To prove the theorem, we choose $N_l=\frac{1}{c}\log{N}$ where $c\triangleq \min\{D(0.5||p), D(0.5||\frac{T}{N})\}$ and $D(a||b)$ is the Kullback-Leibler (KL) distance between two Bernoulli distributions with parameter $a$ and $b$ (see e.g., page $19$ of~[49]). We now prove each part of the theorem separately.

    (Robustness guarantee)
    In order to compute the aggregate of the user models, {\name} requires the partial summation in (8) to be recovered by the users in group $l\in[L]$. 
    This requires users in each group $l\in[L]$ to be able to reconstruct the term  $\widetilde{\mathbf{s}}^{(l-1)}_i$ of all the dropped users in group $l-1$. 
    This is facilitated by our encoding procedure as follows. 
    At stage $l$, each user in group $l+1$ receives $2(N_l - D_l)$ evaluations of the polynomial $g^{(l)}$, where $D_l$ is the number of users dropped in group $l$. 
    Since the degree of $g^{(l)}$ is at most $N_{l}-1$, one can reconstruct $\widetilde{\mathbf{s}}^{(l)}_i$ for all $i\in[N_l]$ using  polynomial interpolation, as long as $2(N_l - D_l) \geq N_{l}$. 
    If the condition of $2(N_l - D_l) \geq N_{l}$ is satisfied for all $l\in[L]$, {\name} can compute the aggregate of the user models.
    Therefore, the probability that {\name} provides robustness guarantee against user dropouts is given by 
    \begin{equation} \label{eq:prob_robustness}
        \mathbb{P}[\text{robustness}] = \mathbb{P} \big[ \bigcap\limits_{l\in[L]} \{D_l \leq \frac{N_l}{2}\}\big]=1-\mathbb{P} \big[ \bigcup\limits_{l\in[L]} \{D_l \geq \floor{\frac{N_l}{2}}+1\}\big].     
    \end{equation}
    
    Now, we show that this probability goes to $1$ as $N\rightarrow \infty$. $D_l$ follows a binomial distribution with parameters $N_l$ and $p$.
    When $p<0.5$, its tail probability is bounded as
    \begin{equation}\label{eq:group_fail_bound}
        \mathbb{P}[D_l \geq \floor{\frac{N_l}{2}}+1]\leq \exp{\big(-c_p N_l)},
    \end{equation} 
    where $c_p=D(\frac{\floor{\frac{N_l}{2}}+1}{N_l}||p)$~[54]. 
    When $N_l$ is sufficiently large, $c_p\approx D(0.5||p)$.
    Note that $c_p>0$ is a positive constant for any $p<0.5$, since by definition the KL distance is non-negative and equal to $0$ if and only if $p=0.5$ (Theorem $2.6.3$ of~[49]).
    Then, using a union bound, the probability of failure can be bounded by 
    \begin{align}
        \mathbb{P}[\text{failure}] 
        = \mathbb{P} \big[ \bigcup\limits_{l\in[L]} \{D_l \geq \floor{\frac{N_l}{2}}+1\}\big] \notag 
        &\leq \sum\limits_{l\in[L]} \mathbb{P} [D_l \geq \floor{\frac{N_l}{2}}+1] \notag \\
        &\leq \frac{N}{\Nl}\exp(-c_p\Nl)  \label{eq:fail_upperbound} \\
        &\triangleq B_{\text{failure}}, \label{eq:fail_bound_final1}
    \end{align}
    where~\eqref{eq:fail_upperbound} follows from~\eqref{eq:group_fail_bound}. 
    Asymptotic behavior of this upper bound is given by
    \begin{align}
        \lim_{N\rightarrow\infty}B_{\text{failure}} 
        &= \exp{\Big\{ \lim_{N\rightarrow\infty}\log{B_{\text{failure}}} \Big\}} \notag \\
        &=\exp{\Big\{\lim_{N\rightarrow\infty}\big(
        \log{N}-\log{\Nl}-c_p\Nl
        \big)\Big\}} \notag \\
        &=\exp{(-\infty)} \label{eq:minus_infty}\\
        &=0, \label{eq:fail_bound_final2}
    \end{align}
    where \eqref{eq:minus_infty} holds because $\Nl = \frac{1}{c}\log N \geq \frac{1}{c_p}\log{N}$.
    From~\eqref{eq:fail_bound_final1} and~\eqref{eq:fail_bound_final2},  $\lim_{N\rightarrow\infty}\mathbb{P}[\text{failure}]=0$. 

    
    (Privacy guarantee) 
    Let $A_l$ be an event that a collusion between users and the server can reveal the local model of any honest user in group $l-1$, and $X_l$ be a random variable corresponding to the number of colluding users in group $l\in[L]$. 
    First, note that the colluding users in groups $l'\leq l-1$ cannot learn any information about a user in group $l-1$ because communication in {\name} is directed from users in lower groups to users in upper groups. 
    Moreover, colluding users in groups $l'\geq l+1$ can learn nothing beyond the partial summation, $\mathbf{s}^{(l')}$. Hence, information breach of a local model in group $l-1$ occurs only when the number of colluding users in group $l$ is greater than or equal to half of the number of users in group $l$. In this case, colluding users can obtain a sufficient number of evaluation points to recover all of the masked models belonging to a certain user, i.e.,  $\widetilde{\mathbf{x}}^{(l-1)}_{i,j}$ in~(5) for all $j\in{N_l}$. Then, they can recover the original model  ${\mathbf{x}}^{(l-1)}_i$ by adding all of the masked models and removing the randomness ${\mathbf{u}}^{(l-1)}_i$. Therefore, $\mathbb{P}[A_l]=\mathbb{P}[X_l\geq \frac{N_l}{2}]$. 
    To calculate $\mathbb{P}[A_l]$, we again consider the random partitioning strategy described in Section~\RNum{4}-A which allocates $N$ users to $L=\frac{N}{\Nl}$ groups whose size is $\Nl$ for all groups, while $T$ out of $N$ users are colluding users.
    Then, $X_l$ follows a hypergeometric distribution with parameters $(N,T,\Nl)$ and its tail probability is bounded as 
    \begin{equation}\label{eq:wassilybound}
        \mathbb{P}[X_l \geq \frac{\Nl}{2}]\leq \exp{\big(-c_T\Nl)},
    \end{equation}
    where $c_T = D(0.5||\frac{T}{N})$~[54]. Note that $c_T>0$ is a positive constant when $T \leq (0.5 -\epsilon)N$ for any $\epsilon > 0$. Then the probability of privacy leakage of any individual model is given by
    \begin{align}
        \mathbb{P}[\text{privacy leakage}] = \mathbb{P}[ \bigcup\limits_{l\in[L]} A_{l}]  
        &\leq \sum\limits_{l\in[L]} \mathbb{P}[A_{l}] \label{eq:privacy_union_bound} \\
        &\leq \frac{N}{\Nl}\exp(-c_T\Nl)  \label{eq:privacy_upperbound} \\
        &\triangleq B_{\text{privacy}}, \notag
    \end{align}
    where \eqref{eq:privacy_union_bound} follows from a union bound and  \eqref{eq:privacy_upperbound} follows from \eqref{eq:wassilybound}.
    Note that \eqref{eq:privacy_upperbound} can be bounded similarly to  \eqref{eq:fail_upperbound} by replacing $c_p$ with $c_T$, from which we find that $\lim_{N\rightarrow \infty} B_{\text{privacy}} = 0$.
    As a result, $\lim_{N\rightarrow\infty}\mathbb{P}[\text{privacy leakage}]=0$.

    (Aggregation overhead)
    As described in Section~\RNum{3}, aggregation overhead consists of two main components, computation and communication.
    The computation overhead quantifies the  processing time for: 1) masking the models with additive secret sharing, 2) adding redundancies to the models through Lagrange coding, and 3) reconstruction of the missing information due to user dropouts.
    First, masking the model with additive secret sharing has a computation overhead of $O(\log{N})$. Second, encoding the masked models with Lagrange coding has a computation overhead of $O(\log{N}\log^2 \log{N} \log \log \log N)$, 
    because evaluating a polynomial of degree $i$ at any $j$ points has a computation overhead of $O(j \log^2 i \log \log i)$~[55], and both $i$ and $j$ are $\log{N}$ for the encoding operation. 
    Third, the decoding operation to recover the missing information due to dropped users has a computation overhead of $O(p\log{N}\log^2 \log N \log \log \log N)$, 
    because it requires evaluating a polynomial of degree $\log{N}$ at $p\log{N}$ points. 
    Within each execution stage, the computations are carried out in parallel by the users. Therefore, computations per execution stage has a computation overhead of $O\big(\log{N}\log^2 \log N \log \log \log N\big)$, which is the same as the computation overhead of a single user.     
    Since there are $L=\frac{cN}{\log{N}}$ execution stages, overall the computation overhead is $O\big(N\log^2 \log N \log \log \log N\big)$.

    Communication overhead is evaluated by measuring the total amount of communication over the entire network, to quantify the communication time in the worst-case communication architecture, which corresponds to the centralized communication architecture where all communication goes through a central server.
    At execution stage $l\in[L]$, each of the $N_l$ users in group $l$ sends a message to each of the $N_{l+1}$ users in group $l+1$, which has a communication overhead of $O(\log^2{N})$. 
    Since there are $L=\frac{cN}{\log{N}}$ stages, overall the communication overhead is $O(N\log{N})$. 
    
    As the aggregation overhead consists of the overheads incurred by communication and computation, the aggregation overhead of {\name} is $C=O(N\log{N})$. 

\subsection{Proof of Theorem~\ref{thm:converse_droprate}}~\label{app:proof_conv}
We first note that $D_l$, the number of users dropped in group $l$, follows a binomial distribution with parameters $N_l$ and $p$. The probability that the information of the users in group $l$ cannot be reconstructed is given by $\mathbb{P}\big[ D_l \geq \floor{\frac{N_l}{2}}+1 \big]$. 
From~[49], this probability can be bounded as 
\begin{equation}\label{eq:robust_bound}
    \mathbb{P}\big[ D_l \geq \floor{\frac{N_l}{2}}+1 \big] \geq \frac{1}{N_l + 1}\exp{( -c^\prime N_l)},
\end{equation}
where $c^\prime=D(0.5||p)$.

The probability that {\name} can provide robustness against user dropouts is given by
\begin{align}
    \mathbb{P}[\text{robustness}]&= \mathbb{P} \big[ \bigcap\limits_{l\in[L]} \{D_l \leq \frac{N_l}{2}\}\big] \notag \\
    &= \mathbb{P}\big[ \{D_l \leq \frac{N_l}{2}\}\big]^{L} \label{eq:indep} \\
    &= \Big( 1-\mathbb{P}\big[ D_l \geq \floor{\frac{N_l}{2}}+1 \big]\Big)^{L} \notag \\
    &\leq \Big( 1- \frac{1}{N_l + 1}\exp{( -c^\prime N_l)}\Big)^{L} \label{eq:robust_bound_applied} \\
    &= \Big( 1- \frac{1}{\log{N} + 1}\exp{( -c^\prime \log{N})}\Big)^{\frac{N}{\log{N}}} 
    \notag \\
    &= \Big( 1- \frac{N^{-c^\prime}}{\log{N} + 1}\Big)^{\frac{N}{\log{N}}} \triangleq B_{\text{robustness}}, \label{eq:robust_final}
\end{align}
where \eqref{eq:indep} holds since the events are independent over the groups and \eqref{eq:robust_bound_applied} follows from \eqref{eq:robust_bound}. 
The asymptotic behavior of the upper bound is then given by
\begin{align}
    \lim_{N\rightarrow\infty}B_{\text{robustness}} 
    &= \exp{\Big\{ \lim_{N\rightarrow\infty}\log{B} \Big\}} \notag \\
    &= \exp{\Big\{ \lim_{N\rightarrow\infty}
    \frac{ \log{\big(1- \frac{N^{-c^\prime}}{\log{N} + 1}\big)} }{\frac{\log{N}}{N}} 
    \Big\}}  
    \notag\\
    &= \exp{\Big\{ \lim_{N\rightarrow\infty}
    \frac{ \frac{c^\prime N^{-c^\prime-1}(1+\log{N})+N^{-c-1}}{(1+\log{N})(1+\log{N}-N^{-c^\prime})} }
    {\frac{1-\log{N}}{N^2}} 
    \Big\}} \label{eq:lhospital_1}\\
    &= \exp{\Big\{ \lim_{N\rightarrow\infty}
    \frac{c^\prime N^{1-c^\prime}\log{N}+N^{1-c^\prime}(1+c^\prime)}
    {(1-\log^2N)(1+\log{N}-N^{-c^\prime})}
    \Big\}} \notag \\
    &= \exp{\Big\{ \lim_{N\rightarrow\infty}
    \frac{c^\prime N^{1-c^\prime}\log{N}+N^{1-c^\prime}(1+c^\prime)}
    {-\log^3N-\log^2N+\log{N}}
    \Big\}} \label{eq:infyterm} \\
    &= \exp{\Big\{ \lim_{N\rightarrow\infty}
    \frac{c^\prime(1-c^\prime)N^{1-c^\prime}\log{N}+N^{1-c^\prime}(1+c^\prime-c^{\prime 2})}
    {-3\log^2N-2\log{N}+1}
    \Big\}} \label{eq:lhospital_2} \\
    &= \exp{\Big\{ \lim_{N\rightarrow\infty}
    \frac{c^\prime(1-c^\prime)^2 N^{1-c^\prime}\log{N}+N^{1-c^\prime}(1-c^\prime)(1+2c^\prime-c^{\prime 2})}
    {-6\log{N}-2}
    \Big\}} \label{eq:lhospital_3} \\
    &= \exp{\Big\{ \lim_{N\rightarrow\infty}
    \frac{c(1-c^\prime)^3 N^{1-c^\prime}\log{N}+N^{1-c^\prime}(1-c^\prime)^2(1+3c^\prime-c^{\prime 2})}
    {-6}
    \Big\}} \label{eq:lhospital_4} \\
    &= \exp{(-\infty)} \label{eq:goto_minus_infty} \\
    &= 0, \label{eq:bound_final}
\end{align}
where~\eqref{eq:lhospital_1},~\eqref{eq:lhospital_2},~\eqref{eq:lhospital_3}, and~\eqref{eq:lhospital_4} follow the L'Hospital's rule,~\eqref{eq:infyterm} removes the terms which do not go to infinity as $N$ goes to infinity, and~\eqref{eq:goto_minus_infty} holds since $N^{1-c^\prime}$ goes to infinity when $0<c^\prime=D(0.5||p)<1$.

From \eqref{eq:robust_final} and~\eqref{eq:bound_final}, $\lim_{N\rightarrow\infty}\mathbb{P}[\text{robustness}]=0$, which completes the proof.

\subsection{Proof of Theorem~\ref{col:modifiedTA_achieve}}~\label{app:proof_thm_gen}
The generalized version of {\name} creates two independent random partitionings of the users, where each partitioning randomly assigns $N$ users into $L$ groups with a group size of $N_l$ for all $l\in[L]$.
To prove the theorem, we choose $N_l=O(\log{N})$ as in the proof of Theorem~1 for both the first and second partitioning. 
We now prove the privacy guarantee for each individual model as well as any aggregate of a subset of models, robustness guarantee, and the aggregation overhead.

(Privacy guarantee) As the generalized protocol follows Algorithm~1, the privacy guarantee of individual models follows from the proof of the privacy guarantee in Theorem~1. We now prove the privacy guarantee of any partial aggregation, i.e., aggregate of the models from any subset of users.     
A collusion between the server and users can reveal a partial aggregation in two events: 1) for any $l,l'\in[L]$, $\mathcal{U}_{l}$ and
$\mathcal{U}^{'}_{l'}$ 
are exactly the same where $\mathcal{U}_{l}$ and $\mathcal{U}^{'}_{l'}$ denote the set of surviving users in group $l$ of the first partitioning and group $l'$ of the second partitioning, respectively, or 2) the number of colluding users in group $l'$ of the second partitioning is larger than half of the group size, and the number of such groups is large enough that colluding users can reconstruct the individual random masks $\{ \mathbf{u}^{(l)}_j\}_{j\in\mathcal{U}_{l}}$ in~(5) for some group $l$ of the first partitioning.
%
In the first event, the server can reconstruct the aggregate of the random masks, $\sum_{j\in\mathcal{U}_{l}}{\mathbf{u}}^{(l)}_j$, and then if the server colludes with any user in group $l+1$ and any user in group $l+2$, they can reveal the partial aggregation $\sum_{j\in\mathcal{U}_{l}}{\mathbf{x}}^{(l)}_j$ by subtracting $\mathbf{s}^{(l+1)}+\sum_{j\in\mathcal{U}_{l}}{\mathbf{u}}^{(l)}_j$ from $\mathbf{s}^{(l+2)}$ in~(9).
The probability that a given group from the second partitioning is the same as any group $l\in[L]$ from the first partitioning is $\frac{N}{N_l} \frac{1}{\binom{N}{N_l}}$. 
As there are $L=\frac{N}{N_l}$ groups in the second partitioning, the probability of the first event is bounded by $\frac{N^2}{N_l^2}\frac{1}{\binom{N}{N_l}}$ from a union bound, which goes to zero as $N\rightarrow\infty$ when $N_l=O(\log{N})$.
In the second event, the colluding users in group $l'$ where the number of colluding users is larger than half of the group size can reconstruct the individual random masks of all users in group $l'-1$. 
If these colluding users can reconstruct the individual random masks of all users in group $l$ for any $l\in[L]$ and collude with any user in group $l+1$ and any user in group $l+2$, they can reveal the partial aggregation $\sum_{j\in\mathcal{U}_{l}}{\mathbf{x}}^{(l)}_j$.
As the second event requires that the number of colluding users  is larger than half of the group size in multiple groups, the probability of the second event is less than the probability in~\eqref{eq:wassilybound}, which goes to zero as $N\rightarrow\infty$.
Therefore, as the upper bounds on these two events go to zero, generalized {\name} can provide privacy guarantee for any partial aggregation, i.e., aggregate of any subset of user models,  with probability approaching to $1$ as $N\rightarrow\infty$.

(Robustness guarantee) Generalized {\name} can provide the same level of robustness guarantee as Theorem~1.
This is because the probability that all groups in the first partitioning have enough numbers of surviving users for the protocol to correctly compute the aggregate $\sum_{j\in\mathcal{U}_{l}}\big({\mathbf{x}}^{(l)}_j+{\mathbf{u}}^{(l)}_j\big)$ is the same as the probability in~\eqref{eq:prob_robustness}, and the probability that all groups in the second partitioning have enough numbers of surviving users is also the same as the probability in~\eqref{eq:prob_robustness}.
Therefore, from a union bound, the upper bound on the failure probability of generalized {\name} is twice that of the bound in~\eqref{eq:fail_bound_final1}, which goes to zero as $N\rightarrow\infty$.

(Aggregation overhead) Generalized {\name} follows  Algorithm~1 which achieves an aggregation overhead of $O(N\log{N})$, and additional operations to secret share the individual random masks $\mathbf{u}^{(l)}_i$ and decode the aggregate of the random masks also have an aggregation overhead of $O(N\log{N})$. 
The aggregation overhead of the additional operations consists of four parts: 1) user $i$ in group $l$ generates the secret shares of $\mathbf{u}^{(l)}_i$ via Shamir's secret sharing with a polynomial degree $\frac{N_l}{2}$, 2) user $i$ in group $l$ sends the secret shares to the $N_l$ users in group $l+1$, 3) user $i$ in group $l$ aggregates the received secret shares, $\sum_{j\in\mathcal{U}^{'}_{l-1}}\big[{\mathbf{u}}^{(l-1)}_j\big]_i$, and sends the sum to the server, and 4) server reconstructs the secret $\sum_{j\in\mathcal{U}^{'}_{l}}{\mathbf{u}}^{(l)}_j$ for all $l\in[L]$.
The computation overhead of Shamir's secret sharing is $O(N_l^2)$~[18], and these computations are carried out in parallel by the users in one group. As there are $L=\frac{N}{N_l}$ groups and $N_l=O(\log{N})$, the computation overhead of the first part is $O(N\log{N})$.
The communication overhead of the second part is $O(N\log{N})$ as the total number of secret shares is $O(N\log{N})$.
The communication overhead of the third part is $O(N)$ as each user sends a single message to the server.
The computation overhead of the last part is $O(N\log{N})$ as the computation overhead of decoding the secret from the $N_l=O(\log{N})$ secret shares is $O(\log^2{N})$~[18], and the server must decode $L=O(\frac{N}{\log{N}})$ secrets.
Therefore, generalized {\name} also achieves an aggregation overhead of $O(N\log{N})$.

\subsection{Additional Experiments}~\label{app:add_exp}
In this section, we provide further numerical evaluations on the performance of {\name}. 
\vspace{0.3cm}
\subsubsection{Breakdown of the total running time}~\label{app:exp_breakdown}
To illustrate the impact of user dropouts, we present the breakdown of the total running time of {\name}, {\name}$+$, and the benchmark protocol in Tables~\ref{tbl:breakdown_N100},~\ref{tbl:breakdown_N100_tree_TA} and~\ref{tbl:breakdown_N100_benchmark}, respectively, for the case of $N=200$ users.   
Tables~\ref{tbl:breakdown_N100} and \ref{tbl:breakdown_N100_tree_TA} demonstrate that, for {\name} and {\name}$+$, the encoding time stays constant with respect to the user dropout rate, and decoding time is linear in the number of dropout users, which takes only a small portion of the total running time. 
Table~\ref{tbl:breakdown_N100_benchmark} shows that, for the benchmark protocol, total running time is dominated by the  reconstruction of pairwise masks (using a pseudo random generator) at the server, which has a computation overhead of $O\big((N-D) + D(N-D)\big)$ where $D$ is the number of dropout users~[3].  
This leads to an increased running time as the number of user dropouts increases. The running time of two {\name} schemes, on the other hand, is relatively stable against varying user dropout rates, as the communication time is independent from the user dropout rate and the only additional overhead comes from the decoding phase, whose overall contribution is minimal. 
Table~\ref{tbl:breakdown_N100_tree_TA} shows that the encoding time of {\name}$+$ is reduced to an $L$-th of the encoding time of original {\name} because encoding is performed in parallel across all groups.
{\name}$+$ also speeds up the decoding and communication phases by reducing the number of execution stages from $L-1$ to $\log{L}$.

It is also useful to comment on the selection of a user dropout rate of up to $p = 0.5$. From a practical perspective, our selection of $p=0.5$ is to demonstrate our results with the same parameter setting as the state-of-the-art [3]. 
From a privacy perspective, as most secure systems, e.g., Blockchain systems, are designed to tolerate at most $50\%$ adversaries, Turbo-Aggregate is also designed to achieve a  privacy guarantee against up to $T=N/2$ where $N$ is total number of users, which limits the maximum value of $p$ to $0.5$, as Turbo-Aggregate provides a trade-off between the maximum user dropout rate $p$ and the privacy guarantee $T$, as detailed in Remark 1.

\begin{table}[t!]
\small
\caption{Breakdown of the running time (ms) of {\name} with $N=200$ users. }
\label{tbl:breakdown_N100}
\begin{center}
\begin{tabular}{crrrr}
\toprule
Drop rate & Encoding & Decoding & Communication & Total \\
\midrule
$10$\% & 2070 & 2333 & 22422  & 26825 \\
$30$\% & 2047 & 2572 & 22484  & 27103 \\
$50$\% & 2051 & 3073 & 22406  & 27530 \\
\bottomrule
\end{tabular}
\end{center}
\end{table}

\begin{table}[t!]
\small
\caption{
Breakdown of the running time (ms) of {\name}$+$ with $N=200$ users. 
}
\label{tbl:breakdown_N100_tree_TA}
\begin{center}
\begin{tabular}{crrrr}
\toprule
Drop rate & Encoding & Decoding & Communication & Total \\
\midrule
$10$\% & 93 & 356 & 3353  & 3802 \\
$30$\% & 94 & 460 & 3282  & 3836 \\
$50$\% & 94 & 559 & 3355  & 4009 \\
\bottomrule
\end{tabular}
\end{center}
\end{table}

\begin{table}[t!]
\small
\caption{Breakdown of the running time (ms) of the benchmark protocol [3] with $N=200$ users. }
\label{tbl:breakdown_N100_benchmark}
\begin{center}
\begin{tabular}{ccccr}
\toprule
Drop rate& Communication of & Reconstruction  & Other & Total \\
& the models & at server &      & \\ 
\midrule
$10$\% & 8670 & 53781   & 832 & 63284 \\
$30$\% & 8470 & 101256  & 742 & 110468 \\
$50$\% & 8332 & 151183  & 800 & 160315 \\
\bottomrule
\end{tabular}
\end{center}
\end{table}

\vspace{0.3cm}
\subsubsection{Impact of bandwidth and stragglers}~\label{app:exp_params} 
In this following, we further study the impact of the bandwidth and stragglers on the performance of our protocol, by measuring the total running time with various communication bandwidth constraints. Our results are demonstrated in Figure \ref{fig:trainingtime_BW}, from which we observe that {\name} still provides substantial gain in environments with more severe bandwidth constraints.


\begin{figure}%
    \centering
    \includegraphics[width=0.5\linewidth]{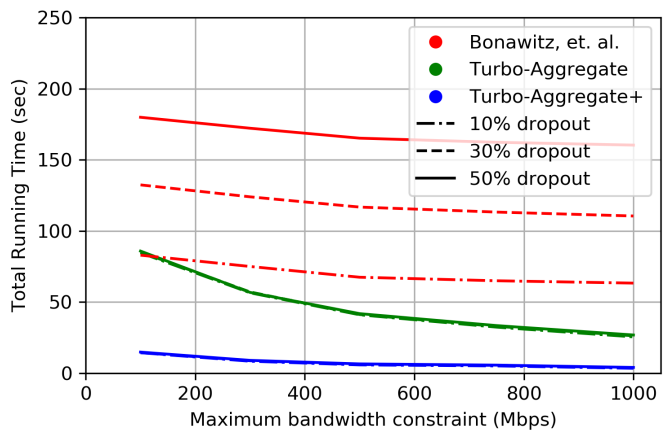} 
    \caption{Total running time of {\name} versus the benchmark protocol from [3] as the maximum bandwidth increases, for various user dropout rates. The number of users is fixed to $N=200$.}
    \label{fig:trainingtime_BW}
\end{figure}

\vspace{0.2cm}
\noindent

\vspace{0.2cm}
\noindent \textbf{Bandwidth.}
For simulating various bandwidth conditions in mobile network environments, we measure the total running time while decreasing the maximum bandwidth constraint for the communication links between the Amazon EC2 machine instances from $1 Gbps$ to $100 Mbps$.
In Figure~\ref{fig:trainingtime_BW}, we observe that the gain of {\name} and {\name}$+$ over the benchmark decrease as the maximum bandwidth constraint decreases. 
This is because for the benchmark, the major bottleneck is the running time for the  reconstruction of pairwise masks, which remains constant over various bandwidth conditions.  
On the other hand, for {\name}, the total running time is dominated by the communication time which is a reciprocal proportion to the bandwidth. 
This leads to a significantly decreased gain of {\name} over the benchmark, $1.9\times$ with the maximum bandwidth constraint of $100 Mbps$. 
However, total running time of {\name}$+$ increases moderately as the maximum bandwidth constraint decreases because communication time of {\name}$+$ is even less than the communication time of the benchmark.
{\name}$+$ still provides a speedup of $12.1\times$ over the benchmark with the maximum bandwidth constraint of $100 Mbps$.
In real-world settings, this gain is expected to increase further as the number of users increases.

\vspace{0.2cm}
\noindent \textbf{Stragglers or delayed users.}
Beyond user dropouts, in a federated learning environment, stragglers or slow users can also significantly impact the total running time. 
{\name} can effectively handle these straggling users by simply treating them as user dropouts. 
At each stage of the aggregation, if some users send their messages later than a certain threshold, users at the higher group can start to decode those messages without waiting for stragglers. 
This has negligible impact on the total running time because {\name} provides a stable total running time as the number of dropout users increases.
For the benchmark, however, stragglers can significantly delay the total running time even though it can also handle stragglers as dropout users. 
This is because the running time for the  reconstruction of pairwise masks corresponding to the dropout users, which is the dominant time consuming part of the benchmark protocol, significantly increases as the number of dropout users increases.

\vspace{0.3cm}
\subsubsection{Additional experiments with the  federated averaging scheme (FedAvg)  [1]}

\begin{figure}[t]%
    \centering
    \includegraphics[width=0.55\linewidth]{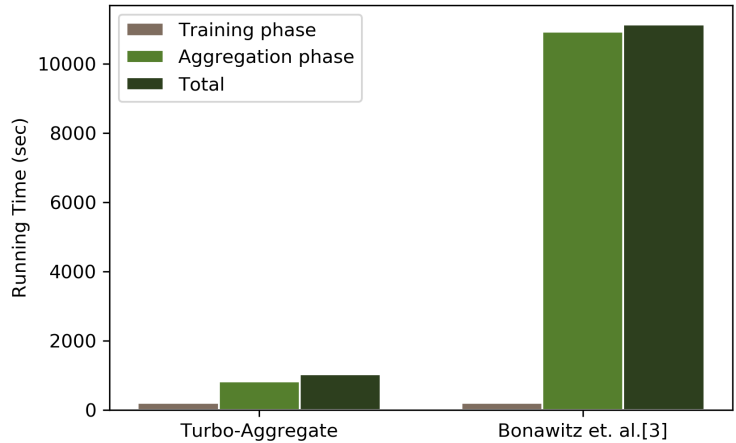}
    \vspace{-0.3cm}
    \caption{Running time of {\name} versus the benchmark protocol from [3]. Both protocols are applied to the aggregation phase in FedAvg with CIFAR-10 dataset and CNN architecture [1]. We measure the running time of training phase and aggregation phase to achieve a test accuracy of $80\%$ with $N=100$, $E=5$ (number of local epochs), and $B=50$ (mini-batch size).}
    \label{fig:realtask}
\end{figure}

In Section \RNum{6}, we have primarily focused on the aggregation phase and measured a single round of the secure aggregation phase with synthesized vectors for the locally trained models.
This is due to the fact that these vectors can be replaced with any trained model using the real world federated learning setups. 

To further investigate the performance of {\name} in real world federated learning setups, we implement the  \emph{FedAvg} scheme from [1] with a convolutional neural network (CNN) architecture on the  CIFAR-10 dataset as considered in [1],  and apply two secure aggregation protocols, {\name} and the benchmark protocol from [3], in the aggregation phase.
This architecture has $100,\!000$ parameters, and with the setting of $N=100$, $E=5$ (number of local epochs), and $B=50$ (mini-batch size), it requires $280$ rounds to achieve a test accuracy of $80\%$ [1].
Figure \ref{fig:realtask} shows the local training time, aggregation time, and total running time of Turbo-Aggregate and the benchmark protocol [3], from which we have  observed that Turbo-Aggregate provides $10.8\times$ speedup over the benchmark to achieve $80\%$ test accuracy.
We note that this gain can be much larger (almost $40\times$) when the number of users is larger ($N=200$).

\end{document}